\newtheorem{example}{Example}
\newtheorem{definition}{Definition}
\newtheorem{proposition}{Proposition}
\newtheorem{corollary}{Corollary}
\newtheorem{theorem}{Theorem}
\newtheorem{remark}{Remark}
\newcommand{\rhodf}{\mbox{$\rho$df}\xspace}
\newcommand{\universe}{\ensuremath{\mathtt{uni}}\xspace}
\newcommand{\voc}{\ensuremath{\mathtt{voc}}\xspace}
\newcommand{\rhodfbot}{\ensuremath{\rho df_\bot}}
\newcommand{\rhodfbotneg}{\ensuremath{\rho df_\bot^\neg}}
\newcommand{\AU}{\ensuremath{\mathbf{U}}\xspace}
\newcommand{\AB}{\ensuremath{\mathbf{B}}\xspace}
\newcommand{\AL}{\ensuremath{\mathbf{L}}\xspace}
\newcommand{\AUBL}{\ensuremath{\mathbf{UBL}}\xspace}
\newcommand{\AUL}{\ensuremath{\mathbf{UL}}\xspace}
\newcommand{\triple}[1]{(#1)}
\newcommand{\DR}{\ensuremath{{\sf R}}}
\newcommand{\DP}{\ensuremath{{\sf P}}}
\newcommand{\DC}{\ensuremath{{\sf C}}}
\newcommand{\DL}{\ensuremath{{\sf L}}}
\newcommand{\I}{\ensuremath{\mathcal{I}}\xspace}
\renewcommand{\int}[1]{{#1}^{\I} }
\newcommand{\intG}[1]{{#1}^{\I_G} }
\newcommand{\intP}[1]{{\mathsf{P}[\![#1]\!]}}
\newcommand{\intC}[1]{{\mathsf{C}[\![#1]\!]}}
\newcommand{\intPT}[1]{{\mathsf{P}^+[\![#1]\!]}}
\newcommand{\intCT}[1]{{\mathsf{C}^+[\![#1]\!]}}
\newcommand{\intPF}[1]{{\mathsf{P}^-[\![#1]\!]}}
\newcommand{\intCF}[1]{{\mathsf{C}^-[\![#1]\!]}}
\newcommand{\rdfsat}{\sdtstile{}{}~\!\!}
\newcommand{\rdfent}{\sdtstile{}{}~\!\!}
\newcommand{\deriv}{\sststile{}{}~\!\!}
\newcommand{\rdfsatbotneg}{\sdtstile{\neg,}{\bot}~\!\!}
\newcommand{\rdfentbotneg}{\sdtstile{\neg}{\bot}~\!\!}
\newcommand{\derivbotneg}{\sststile{\neg}{\bot}~\!\!}
\newcommand{\nturn}[1]{\raisebox{.2ex}{$\not$}{\ #1}}
\newcommand{\nrdfent}{\nturn{\sdtstile{}{}}}
\newcommand{\nderiv}{\nturn{\sststile{}{}}}
\newcommand{\clos}{\ensuremath{{\sf Cl}}}
\newcommand{\closbotneg}{\ensuremath{{\sf Cl_\bot^\neg}}}
\newcommand{\eee}{\ensuremath{\mathsf{e}}}
\newcommand{\tuple}[1]{\langle #1 \rangle }      
\newcommand{\spp}{\ensuremath{\mathsf{sp}}}
\newcommand{\subclass}{\ensuremath{\mathsf{sc}}}
\newcommand{\type}{\ensuremath{\mathsf{type}}}
\newcommand{\dom}{\ensuremath{\mathsf{dom}}}
\newcommand{\range}{\ensuremath{\mathsf{range}}}
\newcommand{\disjP}{\ensuremath{\bot_{\sf p}}}
\newcommand{\disjC}{\ensuremath{\bot_{\sf c}}}
\newcommand{\uniterm}{\ensuremath{\star}}
\newcommand{\up}[1]{\ensuremath{#1\!\!\uparrow}}
\newcommand{\down}[1]{\ensuremath{#1\!\!\downarrow}}
\newcommand{\impf}{\rightarrow}
\newcommand{\andf}{\land}
\renewcommand{\vec}{\mathbf}
\newcommand{\nd}{\noindent}
\newcommand{\ie}{\textit{i.e.},\xspace}
\newcommand{\eg}{\textit{e.g.},\xspace}
\newcommand{\wrt}{w.r.t.\xspace}
\renewcommand{\iff}{if and only if\xspace}
\newcommand{\cf}{\textit{cf.},\xspace}
\newcommand{\viz}{\textit{viz.},\xspace}
\newcommand{\ii}[1]{\mbox{$(#1)$}}
\newif\iftodo
\newif\ifcomment
\title{A Minimal Deductive System for RDFS with Negative Statements}
\author{
\begin{tabular}{cc}
 Umberto Straccia$^1$ &  Giovanni Casini$^{1,2}$\\ \\
\multicolumn{2}{c}{$^1$ISTI - CNR, Pisa, Italy} \\
\multicolumn{2}{c}{$^2$CAIR - University of Cape Town, Cape Town, South Africa}
\end{tabular}
}
\begin{document}

\maketitle

\begin{abstract}
The  triple language RDFS is designed to represent and reason with \emph{positive} statements only (\eg~``antipyretics are drugs"). 
%

In this paper we show how to extend RDFS to express and reason with various forms of negative statements under the Open World Assumption (OWA). To  do so, we start from \rhodf, a minimal, but significant RDFS fragment that covers all essential features of RDFS, and then extend it to \rhodfbotneg, allowing express also statements such as
``radio therapies are \emph{non} drug treatments", ``Ebola \emph{has no} treatment", or "opioids and antipyretics are \emph{disjoint} classes". 
%
%
The main and, to the best of our knowledge, unique features of our proposal are:
\ii{i} \rhodfbotneg~remains syntactically a triple language by extending \rhodf~with new symbols with specific semantics and there is no need to revert to the reification method to represent negative triples; 
\ii{ii} the logic is defined in such a way that any RDFS reasoner/store may handle the new predicates as ordinary terms if it does not want to take account of the extra capabilities;
\ii{iii} despite negated statements, every \rhodfbotneg~knowledge base is satisfiable; 
\ii{iv} the \rhodfbotneg~entailment decision procedure is obtained from \rhodf~via  additional inference rules favouring a potential implementation; and
\ii{v} deciding entailment in \rhodfbotneg~ranges from P to NP.
\end{abstract}

\section{Introduction} \label{intro}

%
%
%


The \emph{Resource Description Framework} (RDF)\footnote{\url{http://www.w3.org/RDF/}} and its extension \emph{RDF Schema} (RDFS)\footnote{\url{https://www.w3.org/TR/rdf-schema/}} are both W3C standards, and nowadays quite popular knowledge representation languages. 
Essentially, a statement in RDF is a triple of the form $\triple{s,p,o}$, allowing to state that \emph{subject} $s$ is related to \emph{object} $o$ via the \emph{property} $p$. 
For instance,   
\[
\triple{\mathtt{fever}, \mathtt{hasTreatment}, \mathtt{paracetamol}}
\]
\nd is such a triple whose intended meaning is ``fever can be treated via paracetamol". RDFS is an extension of RDF providing mechanisms for describing groups of related terms and the relationships between these terms via a specific vocabulary of predicates. So, \eg~the RDFS triple 
\[
\triple{\mathtt{paracetamol}, \type, \mathtt{antipyretic}}
\]
\nd express that ``paracetamol \emph{is an} antipyretic" (here $\type$ is the predicate for class membership specification), while 
\[
\triple{\mathtt{antipyretic}, \subclass, \mathtt{drug}}
\]
\nd asserts that
 ``antipyretic \emph{is a subclass of} drug" ($\subclass$ is the predicate for sub-class specification).

While both languages have been  designed to represent and reason with \emph{positive} statements only, they can not properly deal with \emph{negative} statements such as
\begin{eqnarray}
&&\text{``opioids and antipyretics are \emph{disjoint} classes";} \label{op} \\
&&\text{``radio therapies are \emph{non} drug treatments"; and}  \label{rt} \\
&&\text{``Ebola \emph{has no} treatment".}  \label{eb}
\end{eqnarray}

\nd In particular, we may not infer that ``paracetamol \emph{is not} a treatment for Ebola". 
Such a statement could only be inferred with the major assumption that the Knowledge Base (KB) is complete — the so-called \emph{Closed-World Assumption} (CWA)~\cite{Reiter78}, which however is not realistic to be assumed in many cases. 
For instance, in medicine,  it is important to distinguish between knowing about the absence of a biochemical reaction between substances, and not knowing about its existence at all, which rises then the need for explicitly stating salient \emph{negative} statements (see, \eg~\cite{Arnaout21} for a recent work about it).
This is particularly true in the case in which the information about the represented world is assumed to be incomplete, — the so-called \emph{Open World Assumption} (OWA).

\paragraph{Contribution.}
In this paper we show how to extend RDFS to express and reason with various forms of negative statements under the OWA.
To do so, we start from \rhodf~\cite{Munoz09}, a minimal, but significant RDFS fragment that covers all essential features of RDFS, and then extend it to \rhodfbotneg, allowing express also negative statements via a specific expressions involving negated classes/properties, disjointness relationships, and no-value existence. So, for instance, the \rhodfbotneg~triple
\[
\triple{\mathtt{opioid}, \disjC, \mathtt{antipyretic}}
\]
\nd expresses $(\ref{op})$ ($\disjC$ is the vocabulary predicate for class disjointness specification),  
\[
\triple{\mathtt{radio therapy}, \subclass, \neg \mathtt{drugTreatment}}
\]
\nd expresses $(\ref{rt})$ (here, essentially  we introduce class complements via the $\neg$ operator), while 
\[
\triple{\mathtt{ebola}, \neg \mathtt{hasTreatment}, \uniterm_{\mathtt{treatment}}}
\]
\nd is meant to encode~$(\ref{eb})$
(here, besides property complement, we  also allow the $\uniterm_{c}$ operator, which is the place holder for an universally quantified variable over domain $c$).\footnote{We refer the reader to Table~\ref{tab:folmeaning} for an informal First-Order Logic (FOL) reading of some types of \rhodfbotneg~triples.}


The main and, to the best of our knowledge, unique features of our proposal are (\cf~related work below):
\ii{i} \rhodfbotneg~remains syntactically a triple language by extending \rhodf~with new symbols with specific semantics and there is no need to revert to the reification method to represent negative triples; 
\ii{ii} the logic is defined in such a way that any RDFS reasoner/store may handle the new predicates as ordinary terms if it does not want to take account of the extra capabilities;
\ii{iii} despite negated statements, every \rhodfbotneg~knowledge base is satisfiable, which is obtained via an intentional like four-valued semantics; 
\ii{iv} the \rhodfbotneg~entailment decision procedure is obtained from \rhodf~via  additional inference rules favouring a potential implementation; and
\ii{v} deciding entailment in \rhodfbotneg~ranges from P to NP.


\paragraph{Related Work.}
There have been various works in the past about extending RDFS with  negative statements, or applications that would like or require to have such a feature, which we briefly summarise below and indeed inspired our work.
%

In~\cite{Arnaout21} and related works~\cite{Arnaout20,Arnaout21a,Arnaout21b},  two types of negative statements are considered: 
\ii{i} grounded negative statements of the form $\neg \triple{s,p,o}$, with informal FOL reading $\neg p(s,o)$; and 
\ii{ii} universally negative statements of the form $\neg \exists x.\triple{s,p,x}$, meaning in FOL terms $\neg \exists x. p(s,x)$. 
The former type of triples have been proposed in~\cite{Analyti04} (and subsequent works, see below), while the latter has been addressed in~\cite{Darari15}.
In \cite{Arnaout21} essentially a statistical inference method is proposed to extract useful negative statements of this form, such as ``Leonardo DiCaprio has never been married’’ and ``United Kingdom is not the citizenship of Jimi Hendrix".\footnote{Optionally, triples may be annotated with a degree such as \eg~``The Sultan Resort has no parking facility to degree $0.97$". See \eg~\cite{Zimmermann12} for a general framework to deal with annotated triples.} It also publishes datasets\footnote{\url{https://github.com/HibaArnaout/usefulnegations}} that contain useful negative statements about entities in Wikidata.\footnote{https://www.wikidata.org}  Reasoning has not been addressed (and was not the focus of these works).
Both types of negative statements are covered by $\rhodfbotneg$ and, thus, our work is complementary to \cite{Arnaout21} in the sense that we describe how then to reason with such information.

In~\cite{Darari15}  the problem on how to express the non-existence of information is addressed, which has the form
$No(\{\triple{s_1,p_1,o_1}, \ldots, \triple{s_n,p_n,o_n})$, 
with informal FOL reading  $\neg \exists \vec{x}.(p_1(s_1,o_1) \land \ldots \land p_n(s_n,o_n))$, or  equivalently, $\forall \vec{x}.(\neg p_1(s_1,o_1) \lor \ldots \lor \neg p_n(s_n,o_n))$, where $\vec{x}$ are the variables occurring the triples. It shows how to represent it via the reification method and incorporate it into  SPARQL\footnote{\url{http://www.w3.org/TR/sparql11-query/}} query answering. Reasoning is not addressed however. 
We consider here only the case $n=1$  via the expression
$\triple{s, \neg p, \uniterm_c}$ as the general case $n \geq 2$ would introduce a disjunction, which we would like to avoid for computational reasons.



In~\cite{Analyti08} and related works~\cite{Analyti15,Analyti08a,Analyti09,Analyti11,Analyti13,Analyti05,Analyti04,Damasio10}
the authors deal with \emph{Extended} RDF (ERDF), a non-monotonic logic, where an ERDF ontology consists of two parts: an ERDF graph and an ERDF logic program. An ERDF graph allows \emph{negated} RDF triples of the form $\neg\triple{s,p,o}$,  informally in FOL terms  $\neg p(s,o)$, while in the body of rules all the classical  connectives  $\neg, \supset, \land, \lor, \forall, \exists$, plus the weak negation (negation-as-failure) $\sim$ are allowed. Various ``stable model" semantics have been proposed. From a computational complexity point view, decision problems in ERDF are non-polynomial~\cite{Analyti15}. E.g., deciding model existence and, thus, model existence is not guaranteed, ranges from NP to PSPACE, while query answering goes from co-NP to PSPACE, depending on the setting.\footnote{There are also many more works that use rule languages on top of RDFS, which however we are not going to discuss here (see, \eg~\cite{Casini20}.} In comparison, \rhodfbotneg~does not use a rule layer, the triple language is more expressive, model existence is guaranteed and the computational complexity ranges between P and NP. Of course, all inference rules for \rhodfbotneg~can be implemented in the rule layer of ERDF (and in Datalog in general).


Eventually, \cite{Casini20}  considers  $\rhodfbot$ on top of which to develop a non-monotonic RDFS logic based on Rational Closure~\cite{Lehmann92b}. $\rhodfbot$ extends $\rhodf$ allowing to express disjointness among (positive) classes and relations.

In summary, our work aims at putting all together within RDFS to deal with expressions of the form \eg~$(\ref{op})$--$(\ref{eb})$ in a generalised way.



We proceed as follows. As next we introduce the basic notions about \rhodf~we will rely on. Section~\ref{rhodfdisjneg} is the main part of this paper in which we extend \rhodf~to \rhodfbotneg. The paper concludes with a summary of the contributions and addresses some topics for future work.

\section{Preliminaries}

\nd For the sake of our purposes, we will rely on a minimal, but significant RDFS fragment, called 
$\rhodf$~\cite{Munoz09,Munoz07}, that covers the essential features of RDFS. In fact, $\rhodf$ may be considered as the logic behind RDFS and suffices to illustrate the main concepts and algorithms we will consider.
%
$\rhodf$ is defined as the following subset of the RDFS vocabulary:
\begin{equation} \label{vocrdf}
\rhodf = \{ \spp, \subclass, \type, \dom, \range\} \ .
\end{equation}

\nd Informally, 
\ii{i} $\triple{p,  \spp, q}$ means that property $p$ is a \emph{sub property} of property $q$;
\ii{ii}  $\triple{c, \subclass, d}$ means that class $c$ is a \emph{sub class} of class $d$; 
\ii{iii}  $\triple{a, \type, b}$ means that $a$ is of \emph{type}  $b$; 
\ii{iv}  $\triple{p, \dom, c}$ means that the \emph{domain} of property $p$ is $c$;
\ii{v}  $\triple{p, \range, c}$ means that the \emph{range} of property $p$ is $c$.


\paragraph{Syntax.} \label{sec:rdf-syntax} 
\nd Assume pairwise disjoint alphabets $\AU$ (\emph{RDF URI references}), $\AB$ (\emph{Blank nodes}), and $\AL$ (\emph{Literals}). We assume $\AU, \AB$, and $\AL$ fixed, and for simplicity we will denote unions of these sets simply concatenating their names. We call elements in $\AUBL$ \emph{terms} (denoted $a,b, \ldots, w$), and elements in $\AB$ \emph{variables} (denoted $x,y,z$).\footnote{All symbols may have upper or lower script.} A \emph{vocabulary} is a subset of $\AUL$ and we assume that $\AU$ contains the $\rhodf$ vocabulary (see Equation~\ref{vocrdf}).
%
%
%
A \emph{triple} is of the form
\[
\tau=\triple{s,p,o} \in \AUBL \times \AU \times \AUBL \ ,
\]
\nd where $s,o \notin \rhodf$.  We call $s$ the \emph{subject}, $p$ the \emph{predicate}, and $o$ the \emph{object}. A \emph{graph} (or \emph{RDF Knowledge Base}) $G$ is a set of triples $\tau$.
A subgraph is a subset of a graph. The \emph{universe} of $G$, denoted $\universe(G)$, is the set of terms in $\AUBL$ that occur in the triples of $G$. The \emph{vocabulary} of $G$, denoted by $\voc(G)$ is the set $\universe(G) \cap \AUL$. A graph is \emph{ground} if it has no blank nodes, \ie~variables.
%
A \emph{map} (or \emph{variable assignment}) is  as a function $\mu : \AUBL \to \AUBL$ preserving URIs and literals, \ie $\mu(t) = t$, for all $t \in \AUL$. Given a graph $G$, we define 
\[
\mu(G)= \{ \triple{\mu(s), \mu(p), \mu(o)} \mid \triple{s, p, o}\in G \} \ .
\]
\nd We speak of a map $\mu$ from $G_{1}$ to $G_{2}$, and write $\mu : G_{1} \to G_{2}$, if $\mu$ is such that $\mu(G_{1}) \subseteq G_{2}$.

\begin{example}[Running example]\label{exrdfs}
The following is a $\rhodf$~graph:\footnote{For ease of presentation, we use the terms paracetomol, antipyretic, morphine and opioid to mean paracetomol-, antipyretic-, morphine- and opioid-treatement, respectively. }
\begin{align*}
G =  \{ & 
      \triple{\mathtt{paracetamol}, \type, \mathtt{antipyretic}}, \\
    & \triple{\mathtt{antipyretic}, \subclass, \mathtt{drugTreatment}},  \\  
    &  \triple{\mathtt{morphine}, \type, \mathtt{opioid}},  
     \triple{\mathtt{opioid}, \subclass, \mathtt{drugTreatment}},  \\  
     & \triple{\mathtt{drugTreatment}, \subclass, \mathtt{treatment}},    \\
    & \triple{\mathtt{brainTumour}, \type, \mathtt{tumour}},    \\ 
    & \triple{\mathtt{hasDrugTreatment}, \spp, \mathtt{hasTreatment}},    \\
    & \triple{\mathtt{hasTreatment}, \dom, \mathtt{illness}},    \\
    & \triple{\mathtt{hasTreatment}, \range, \mathtt{treatment}},    \\
    & \triple{\mathtt{hasDrugTreatment}, \range, \mathtt{drugTreatment}}, \\
    &  \triple{\mathtt{fever}, \mathtt{hasDrugTreatment}, \mathtt{paracetamol}} \\
    &  \triple{\mathtt{brainTumour}, \mathtt{hasDrugTreatment}, \mathtt{morphine}} \  \}   \ .
\end{align*}
%
%
\end{example}
%
%
%
%
%
\paragraph{Semantics.} \label{sec:rdf-semantics} 
\nd A $\rhodf$ \emph{interpretation} $\I$ over a vocabulary $V$ is a tuple 
\[
\I =\tuple{\Delta_{\DR}, \Delta_{\DP}, \Delta_{\DC}, \Delta_{\DL}, \intP{\cdot}, \intC{\cdot}, \int{\cdot}} \ ,
\]
\nd where $\Delta_{\DR}, \Delta_{\DP}$, $\Delta_{\DC}, \Delta_{\DL}$ are the interpretation domains of $\I$, which are finite non-empty sets, and $\intP{\cdot}, \intC{\cdot}, \int{\cdot}$ are
the interpretation functions of $\I$. They have to satisfy: 

\begin{enumerate}
\item $\Delta_{\DR}$ are the resources; 
\item $\Delta_{\DP}$ are property names; 
\item $\Delta_{\DC} \subseteq \Delta_{\DR}$ are the classes; 
\item $\Delta_{\DL} \subseteq \Delta_{\DR}$ are the literal values and contains all the literals in $\AL \cap V$;
\item  $\intP{\cdot}$ is a function $\intP{\cdot}\colon \Delta_{\DP} \to 2^{\Delta_{\DR} \times \Delta_{\DR}}$;
\item  $\intC{\cdot}$  is a function  $\intC{\cdot}\colon \Delta_{\DC} \to 2^{\Delta_{\DR}}$;
\item $\int{\cdot}$ maps each $t \in \AUL \cap V$ into a value $\int{t} \in \Delta_{\DR} \cup \Delta_{\DP}$, where
  $\int{\cdot}$ is the identity for literals;  and
 \item $\int{\cdot}$ maps each variable $x \in \AB$ into a value $\int{x} \in \Delta_{\DR}$.

\end{enumerate}

\nd As next, for space reasons and without loosing the substantial ingredients, we illustrate the so-called~\emph{reflexive-relaxed} $\rhodf$ semantics~\cite[Definition 12]{Munoz09}, in which the predicates $\subclass$ and $\spp$ are \emph{not} assumed to be reflexive.
%
Informally, the notion entailment is defined using the idea of \emph{satisfaction} of a graph under certain interpretation. Intuitively a ground triple $\triple{s, p, o}$ in an RDF graph $G$ will be true under the interpretation $\I$ if $p$ is interpreted as a property name, $s$ and $o$ are interpreted as resources, and the interpretation of the pair $(s, o)$ belongs to the extension of the property assigned to $p$. Moreover, blank nodes, \ie~variables, work as existential variables. Intuitively the triple $\triple{x, p, o}$ with $x \in \AB$ will be true under $\I$ if $\I$ maps $x$ into a resource $s$ such that the pair $(s, o)$ belongs to the extension of the property assigned to $p$. Formally,

\begin{definition}[Model/Satisfaction/Entailment $\rdfsat$]\label{satisfaction}
A $\rhodf$ interpretation $\I$ is a \emph{model} of a  $\rhodf$ graph $G$, denoted $\I \rdfsat G$, \iff $\I$ is an interpretation over the vocabulary $\rhodf~\cup~\universe(G)$ such that:
\begin{description}\label{condRDF}
 \item[Simple:] \
 \begin{enumerate}
  \item for each $\triple{s, p, o} \in G$, $\int{p} \in\Delta_{\DP}$ and $(\int{s}, \int{o}) \in \intP{\int{p}}$;
 \end{enumerate}
 \item[Subproperty:] \
 \begin{enumerate}
 \item $\intP{\int{ \spp}}$ is transitive over $\Delta_{\DP}$;
 \item if $(p, q) \in \intP{\int{ \spp}}$ then $p, q \in \Delta_{\DP}$ and $\intP{p} \subseteq \intP{q}$; 
 \end{enumerate}
 \item[Subclass:] \
 \begin{enumerate}
 \item $\intP{\int{\subclass}}$ is transitive over $\Delta_{\DC}$; 
 \item if $(c, d) \in \intP{\int{\subclass}}$ then $c, d \in \Delta_{\DC}$ and $\intC{c} \subseteq \intC{d}$; 
 \end{enumerate}
 \item[Typing I:] \
 \begin{enumerate}
 \item $x \in \intC{c}$ \iff $(x,c) \in \intP{\int{\type}}$;
 \item if $(p, c) \in \intP{\int{\dom}}$ and $(x, y) \in \intP{p}$ then $x \in \intC{c}$;
 \item if $(p, c) \in \intP{\int{\range}}$ and $(x, y) \in \intP{p}$ then $y \in \intC{c}$;
 \end{enumerate}
 \item[Typing II:] \
 \begin{enumerate}
 \item for each $\eee \in \rhodf$, $\int{\eee} \in \Delta_{\DP}$;
 \item if $(p, c) \in \intP{\int{\dom}}$ then $p \in \Delta_{\DP}$ and $c \in \Delta_{\DC}$;
 \item if $(p, c) \in \intP{\int{\range}}$ then $p \in \Delta_{\DP}$ and $c \in \Delta_{\DC}$;
 \item if $(x, c) \in \intP{\int{\type}}$ then $c \in \Delta_{\DC}$.
 \end{enumerate}

\end{description}

\nd A graph $G$ is \emph{satisfiable} if it has a model $\I$. Moreover, given two graphs $G$ and $H$, we say that $G$ \emph{entails} $H$, denoted $G \rdfent H$, \iff every model of $G$ is also a model of $H$.
\end{definition}

\begin{example} \label{exrdfsI}
Consider Example~\ref{exrdfs}. Then it can be verified that
\[
G \rdfent \{ \triple{\mathtt{fever}, \mathtt{hasTreatment}, x},  \triple{x, \type, \mathtt{drugTreatment}} \} \ .
\]
%
%

\end{example}
%
%
\paragraph{Deductive System for $\rhodf$.} 
%
We recap the sound and complete deductive system for $\rhodf$~\cite{Munoz09}.  
In every rule, $A,B,C,D,E,X$ and $Y$ stand for meta-variables to be replaced by actual terms.  
An \emph{instantiation} of a rule is obtained by replacing all meta-variables with terms such that all triples  after the replacement are \rhodf~triples.
\begin{definition}[Deductive rules for $\rhodf$] \label{dedrhodf}
The \emph{deductive rules for $\rhodf$} are the following:
%
  \begin{enumerate}
  \item Simple: \\ [0.25em]
    \begin{tabular}{llll}
      $(a)$ & $\frac{G}{G'}$ for a map $\mu:G' \to G$ &  
      $(b)$ & $\frac{G}{G'}$ for  $G' \subseteq G$ 
    \end{tabular} 
    
  \item Subproperty: \\ [0.25em]
    \begin{tabular}{llll}
      $(a)$ & $\frac{\triple{A,  \spp, B},  \triple{B,  \spp, C}}{\triple{A,  \spp, C}}$ & $(b)$ & $\frac{\triple{D,  \spp, E},  \triple{X, D, Y}}{\triple{X, E, Y}}$ 
    \end{tabular} 
  \item Subclass: \\ [0.25em]
    \begin{tabular}{llll}
      $(a)$ & $\frac{\triple{A, \subclass, B},  \triple{B, \subclass, C}}{\triple{A, \subclass, C}}$ & $(b)$ & $\frac{\triple{A, \subclass, B},  \triple{X, \type, A}}{\triple{X, \type, B}}$ 
    \end{tabular} 
  \item Typing: \\ [0.25em]
    \begin{tabular}{llll}
      $(a)$ & $\frac{\triple{D, \dom, B},  \triple{X, D, Y}}{\triple{X, \type, B}}$ & $(b)$ & $\frac{\triple{D, \range, B},  \triple{X, D, Y}}{\triple{Y, \type, B}}$ 
    \end{tabular} 
  \item Implicit Typing: \\ [0.25em]
    \begin{tabular}{llll}
      $(a)$ & $\frac{\triple{A, \dom, B},  \triple{D,  \spp, A}, \triple{X, D, Y}}{\triple{X, \type, B}}$  \\\\ 
      $(b)$ & $\frac{\triple{A, \range, B},  \triple{D,  \spp, A}, \triple{X, D, Y}}{\triple{Y, \type, B}}$
    \end{tabular}
  \end{enumerate}
\end{definition}



\begin{definition}[Derivation $\deriv$]\label{def:derivatioGn}
Let $G$ and $H$ be $\rhodf$-graphs. $ G\deriv H$ \iff there exists a sequence of graphs $P_1,P_2,\ldots, P_k$ with $P_1=G$ and $P_k=H$ and for each $j$ ($2 \leq j \leq k$) one of the following cases hold:

\begin{itemize}
\item there is a map $\mu: P_j\rightarrow P_{j-1}$ (rule (1a));
\item $P_j \subseteq P_{j-1}$ (rule (1b));
\item there is an instantiation $R/R'$ of one of the rules (2)-(5), such that
$R \subseteq P_{j-1}$ and $P_j = P_{j-1} \cup R'$.
\end{itemize}

\nd \nd Such a sequence of graphs is called a proof of $G\deriv H$. Whenever
$G\deriv H$, we say that the graph $H$ is derived from the graph $G$. Each pair $(P_{j-1}, P_j)$, $1\leq j \leq k$ is called a step of the proof which is labeled by the respective instantiation $R/R'$ of the rule applied at the step.
\end{definition} 

\nd Please note that  if $G\deriv H$ then $H$ is indeed a graph.
Finally, the \emph{closure} of a graph $G$, denoted $\clos(G)$,  is defined as 
\[
\clos(G) = \{\tau \mid G \deriv^{*}~\tau\} \ ,
\]
\nd where $\deriv^{*}$ is as $\deriv$ except that rule $(1a)$ is excluded. 

\begin{example} \label{exrdfsIIA}
Consider Example~\ref{exrdfs}. Then it can be verified that
\[
\clos(G) \supseteq \{ \triple{\mathtt{morphine}, \type, \mathtt{drugTreatment}},
\triple{\mathtt{brainTumour}, \type, \mathtt{illness}} \} \ . 
\]
\end{example}

\nd The following proposition recaps salient results taken from~\cite{Gutierrez11,Munoz09,Horst05}

\begin{proposition} \label{proprdf}
Every $\rhodf$-graph is satisfiable. Moreover, let $G$ and $H$ be $\rhodf$-graphs. Then
\begin{enumerate}
\item $G \deriv H$ \iff $G \models H$;
\item if  $G \models H$ then there is a proof of $H$ from $G$ where rule $(1a)$ is used at most once and at the end;
\item the closure of $G$ is unique and $|\clos(G)| \in \Theta(|G|^2)$;
\item deciding $G\rdfent H$ is an NP-complete problem;
\item if $G$ is ground then $\clos(G)$ can be determined without using implicit typing rules $(5)$;
\item if $H$ is ground, then  $G\rdfent H$ \iff $H \subseteq \clos(G)$;
\item There is no triple $\tau$ such that $\emptyset \models \tau$.
\end{enumerate}
\end{proposition}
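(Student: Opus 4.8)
The plan is to treat the seven claims in dependency order, establishing the soundness-and-completeness equivalence (1) first, since (2), (6) and the NP-membership half of (4) all build on it.

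\textbf{Soundness of (1).} For the ``$\Rightarrow$'' direction I would verify that each deductive rule preserves models: fixing an arbitrary $\I \rdfsat G$, I would check, rule by rule, that whenever the premises of an instantiation hold in $\I$ then so does the conclusion. Every such check is a direct appeal to one clause of Definition~\ref{satisfaction} --- rule (2a) uses transitivity of $\intP{\int{\spp}}$, rule (2b) uses $\intP{p}\subseteq\intP{q}$, rules (4a)/(4b) use the $\dom$/$\range$ clauses, and the map/subset rules (1a)/(1b) are immediate from the definition of satisfaction and of a map. An induction on the length of the derivation then yields $G \deriv H \Rightarrow G \models H$.

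\textbf{Completeness of (1).} This is the crux. I would build a canonical interpretation $\I_G$ out of $\clos(G)$: take (the relevant parts of) $\universe(G)$ as domain, interpret each property name $p$ by the set of pairs $(s,o)$ with $\triple{s,p,o}\in\clos(G)$, and interpret each class dually via the $\type$ triples in the closure. Using the fact that $\clos(G)$ is closed under all rules, I would show $\I_G \rdfsat G$, and that $\I_G$ satisfies a \emph{ground} triple $\tau$ only when $\tau\in\clos(G)$; for non-ground $H$ the existential reading of blank nodes is discharged by exhibiting the required map into $\clos(G)$. The delicate points are choosing $\Delta_{\DP}$ and $\Delta_{\DC}$ so that all of the Typing~II conditions hold simultaneously, and handling blank nodes occurring in predicate-like positions --- which is precisely what the implicit typing rules (5) are designed to repair. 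I expect this construction, and this blank-node interaction, to be the main obstacle; the remaining claims are comparatively routine once (1) is in hand.

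\textbf{Claims (2), (3), (6), (4).} For (2) I would argue by proof normalization: rules (2)--(5) only add triples and commute with the application of a map, while two consecutive maps compose into one, so every use of (1a) can be postponed to a single application at the end. Claim (6) then follows, since for ground $H$ that final map acts as the identity, forcing $H$ into the $\deriv^{*}$-closure: $(\Leftarrow)$ is soundness applied to each $\tau\in\clos(G)$, and $(\Rightarrow)$ combines completeness with (2). For (3), uniqueness is immediate because rules (2)--(5) define a monotone operator whose least fixpoint is $\clos(G)$; for the size bound I set $n=|\universe(G)|=O(|G|)$ and note that the $\spp$-, $\subclass$- and $\type$-triples each number $O(n^{2})$, while the only rule producing a fresh predicate, (2b), keeps the subject--object pair among the $O(|G|)$ pairs already present in $G$, contributing $O(|G|\cdot n)=O(|G|^{2})$; a chain of subproperty/subclass declarations crossed with a block of base triples realizes the matching $\Omega(|G|^{2})$ lower bound. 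Claim (4) splits accordingly: membership in NP follows by computing $\clos(G)$ in polynomial time and guessing the single witnessing map of (2) with $\mu(H)\subseteq\clos(G)$, checkable in polynomial time; hardness follows by the standard reduction from a graph-homomorphism/colourability problem, exploiting the existential semantics of blank nodes in $H$ so that satisfying maps correspond exactly to homomorphisms.

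\textbf{Claims (5) and (7).} For (5) I would show that each instance of (5a)/(5b) is simulated by (2b) followed by (4a)/(4b): from $\triple{D,\spp,A}$ and $\triple{X,D,Y}$, rule (2b) yields $\triple{X,A,Y}$, and then (4a) with $\triple{A,\dom,B}$ yields $\triple{X,\type,B}$. The only gap is that the intermediate triple $\triple{X,A,Y}$ must be a legal \rhodf~triple, i.e.\ $A\in\AU$; this can fail only when $A$ is a blank node, which never occurs once $G$ is ground, whence the restriction. For (7) I would exhibit one model of $\emptyset$ refuting every triple: interpret every property extension as empty while still placing each \rhodf~symbol in $\Delta_{\DP}$ to satisfy Typing~II; since satisfaction of any $\tau=\triple{s,p,o}$ requires $(\int{s},\int{o})\in\intP{\int{p}}$, this model satisfies no $\tau$, so $\emptyset\not\models\tau$ for every triple $\tau$.
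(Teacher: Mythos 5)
Your outline is essentially correct, but note first that the paper contains no proof of this proposition at all: it is explicitly a recap of results imported from \cite{Gutierrez11,Munoz09,Horst05}, with only Remark~\ref{np} sketching the two points the authors comment on (points 3 and 4). Measured against that remark and against the machinery the paper actually develops later for \rhodfbotneg, your reconstruction lines up well: your closure-based canonical interpretation is precisely the template the paper instantiates in Propositions~\ref{completefirstrhodfbotneg} and~\ref{completesecond} (identity interpretation of terms, $\intP{p}$ read off $\clos(G)$, and a map $\mu\colon H \to \clos(G)$ extracted from the canonical model's satisfaction of $H$); your quadratic upper-bound analysis centred on rule $(2b)$ matches Remark~\ref{np}, though the paper's lower-bound witness is simpler than yours --- a bare chain $\triple{p_1,\spp,p_2},\ldots,\triple{p_{n},\spp,p_{n+1}}$ already yields $\Omega(n^2)$ through the transitivity rule $(2a)$ alone, no crossed block of base triples needed; and your NP-hardness via homomorphism/colourability is interchangeable with the $k$-clique reduction of ter Horst \cite{Horst05} that the paper recounts, both resting on the observation that maps from a blank-node graph are exactly graph homomorphisms. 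One genuine divergence of route: for point 2 you prove a normalization lemma (rule instantiations commute with maps, consecutive maps compose). This is sound --- maps fix URIs, so the $\mu$-image of an instantiation of rules $(2)$--$(5)$ is again an instantiation --- but it is heavier than the standard route, visible in the paper's \rhodfbotneg~analogue Corollary~\ref{colbotneg}, where point 2 falls out immediately from completeness because the canonical-model argument already delivers $\mu\colon H \to \clos(G)$ as the single terminal use of rule $(1a)$; what your normalization buys in exchange is independence from the model theory, i.e.\ a purely proof-theoretic fact. Finally, one small soft spot in your point 5: since triples live in $\AUBL \times \AU \times \AUBL$, groundness of $G$ only guarantees $A \in \AUL$, not $A \in \AU$, so ``this can fail only when $A$ is a blank node'' elides the case of a literal $A$ appearing as subject of $\triple{A,\dom,B}$ and object of $\triple{D,\spp,A}$; you should add a word on why that case cannot (or need not) arise for the simulation of rules $(5)$ by $(2b)$ plus $(4a)$/$(4b)$ to be airtight.
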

\begin{remark} \label{np}
Please note that: \ii{i} a proof of NP-completeness of point 4.~above can be found in \cite[Proposition 2.19]{Horst05} via a reduction of the 
$k$-clique problem encoding an undirected graph $G$ into a \rhodf~graph $G'$ (an edge $\tuple{v,w}$ is encoded via two triples $\triple{v,e,w}$ and $\triple{w,e,v}$) and $H'$ consists of the triples $\triple{x,e,y}$, where $x$ and $y$ are distinct variables of new set of $k$ blank nodes. Then, $G$ has a clique of size $\geq k$ iff $G' \models H'$, \ie~there is a map $\mu\colon H' \to G'$; and
\ii{ii} concerning the size of the closure, the lower bound is determined by triples 
$\triple{p_1, \spp, p_2}, \ldots, \triple{p_{n}, \spp, p_{n+1}}$, whose closure's size is $\Omega(n^2)$  via rule $(2a)$. The upper bound follows by an analysis of the rules, where the important point is the propagation of triples $\triple{s,p,o}$ via rule $(2b)$. This gives at most a quadratic upper bound (for triples with fixed predicate, the quadratic bound is trivial).
\end{remark}

\nd Please note that from Proposition~\ref{proprdf} it follows that deciding if, for two ground $\rhodf$-graphs $G$ and $H$, $G \models H$ can be done in time $O(|H||G|^2)$ by computing first the closure of $G$ and then check whether $H$ is in that closure. However, \cite{Munoz09} presents also an alternative method not requiring to compute the closure with a computational benefit as illustrated by the following proposition:
\begin{proposition} \label{propnlogn}
Let $G$ and $H$ be two ground \rhodf~graphs. Then deciding if $G \models H$ can be done in time $O(|H||G|\log |G|)$. The result hods also in case each triple in $H$ has at most one blank node.
\end{proposition}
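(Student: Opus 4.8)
The plan is to avoid materialising the closure $\clos(G)$, which by Proposition~\ref{proprdf}(3) may have size $\Theta(|G|^2)$, and instead to test membership $\tau\in\clos(G)$ separately for each triple $\tau\in H$ in time $O(|G|\log|G|)$. For ground $H$ this suffices: by Proposition~\ref{proprdf}(6) we have $G\models H$ \iff $H\subseteq\clos(G)$, so the total cost is $O(|H|\,|G|\log|G|)$. Since $G$ is ground, Proposition~\ref{proprdf}(5) lets me ignore the implicit typing rules (5), so only rules (2)--(4) generate new triples, and I would first establish a \emph{local characterisation} of $\clos(G)$-membership: no rule produces a $\spp$, $\subclass$, $\dom$ or $\range$ triple except the transitivity rules (2a),(3a); hence $\triple{s,\spp,o}\in\clos(G)$ (resp. $\triple{s,\subclass,o}$) \iff $o$ is reachable from $s$ in the directed graph on property (resp. class) terms whose edges are the $\spp$- (resp. $\subclass$-) triples of $G$.

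Second, I would build from $G$, by sorting its triples — this is where the $\log|G|$ factor enters — the adjacency lists of the $\spp$- and $\subclass$-graphs together with indices that, given a subject $s$ (or a pair $(s,o)$, or a class $c$), return the matching triples of $G$ in logarithmic time. Each triple $\tau=\triple{s,p,o}$ of $H$ would then be handled by a case split on $p$: if $p\in\{\spp,\subclass\}$, a single reachability query of cost $O(|G|)$ settles it; if $p$ is an ordinary property, then — because such a predicate can appear in a derived triple only through rule (2b) — $\tau\in\clos(G)$ \iff $\triple{s,q,o}\in G$ for some $q$ that reaches $p$ in the $\spp$-graph, which I test by reverse reachability from $p$ plus a lookup of the predicates joining $s$ to $o$.

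Third, the delicate case is $p=\type$, say $o=b$. Here a derived type triple can arise only from a direct assertion, from a domain application (4a), or from a range application (4b), each subsequently propagated upward by subclass (3b); I would prove these three sources exhaustive and then test them against the set $S_b$ of classes reaching $b$ in the $\subclass$-graph (reverse reachability, including $b$): $\tau\in\clos(G)$ \iff either some $\triple{s,\type,c}\in G$ has $c\in S_b$, or $s$ is the subject of some $\triple{s,q',y}\in G$ whose predicate has an $\spp$-superproperty $q$ with $\triple{q,\dom,c}\in G$ and $c\in S_b$, or the symmetric range condition holds with $\triple{y,q',s}\in G$. Each disjunct is evaluated by one reverse-reachability computation in the $\spp$-graph (from the domain/range-relevant properties) plus sorted lookups, still $O(|G|\log|G|)$. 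Proving that this finite case analysis captures every derivation — that no interaction of rules (2)--(4) yields a triple outside these patterns — is the main obstacle; it follows by induction on the length of a derivation, tracking which rule could have produced a triple of each predicate shape.

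Finally, for the extension in which every triple of $H$ carries at most one blank node, I would use the map characterisation $G\models H$ \iff there is a map $\mu:H\to\clos(G)$ (Proposition~\ref{proprdf}(1),(2)), the ground case being the special case of the identity map. The restriction is crucial: since no triple contains two blank nodes (and predicates are never blank), no triple links two distinct blank nodes, so the choices of $\mu$ on different blank nodes are \emph{independent}. For each blank node $x$, every triple containing $x$ is ground apart from $x$, and the same reachability/lookup machinery computes, for each such triple $\tau$, the candidate set of terms $t$ with $\tau[x/t]\in\clos(G)$; then $G\models H$ \iff all purely ground triples of $H$ lie in $\clos(G)$ and, for every $x$, the intersection of these candidate sets over all triples containing $x$ is nonempty. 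Because each triple of $H$ contributes to at most one blank node, the total work summed over all triples is again $O(|H|\,|G|\log|G|)$. This independence is exactly what fails in the two-blank-node reduction of Remark~\ref{np}, which is why that case is NP-hard while this one is not.
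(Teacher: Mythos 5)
The paper itself offers no proof of Proposition~\ref{propnlogn}: it is stated as a recap and imported from \cite{Munoz09}, and your reconstruction follows essentially the same route as that source — a per-triple reachability characterisation of membership in $\clos(G)$ (justified, as you observe, because the triple syntax bans $\rhodf$ vocabulary from subject/object position, so rule $(2b)$ can never manufacture $\spp$, $\subclass$, $\dom$, $\range$ or $\type$ triples, and Proposition~\ref{proprdf}(5) removes rules $(5)$ for ground $G$), evaluated with sorted indices in $O(|G|\log|G|)$ per triple of $H$, plus the candidate-set intersection argument whose soundness rests exactly on the independence of blank nodes that never co-occur in a triple, correctly contrasted with the two-blank-node reduction of Remark~\ref{np}. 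The only detail to tighten is imposed by the reflexive-relaxed semantics: for queries $\triple{s,\spp,o}$ and $\triple{s,\subclass,o}$ ``reachable'' must mean a path of length at least one (e.g.\ $\triple{a,\spp,a}$ requires a genuine $\spp$-cycle through $a$), whereas the length-zero case must be allowed when matching a base predicate $q$ with $q=p$ and in your ``including $b$'' convention for the $\type$ case — a bookkeeping point, not a gap.
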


\section{Extending \rhodf~with Negative Statements} \label{rhodfdisjneg}

\nd In this section, we show how to extend $\rhodf$ allowing to represent negative statements. 


\subsection{Syntax} \label{rhodfbotneg}

To start with, consider a new pair of predicates, $\disjC$ and $\disjP$, representing disjoint information: \eg 
\ii{i} $\triple{c,\disjC, d}$ indicates that the classes $c$ and $d$ are disjoint; analogously, 
\ii{ii} $\triple{p,\disjP, q}$ indicates that the properties $p$ and $q$ are disjoint.


We call $\rhodfbot$ the vocabulary obtained from $\rhodf$ by adding $\disjC$ and $\disjP$, that is, 
\begin{equation}\label{vocrhobot}
\rhodfbot = \rhodf \cup \{\disjC, \disjP\} \ .
\end{equation}

\nd  Like for $\rhodf$, we assume that $\AU$ contains the $\rhodfbot$ vocabulary.
%
%
%
Now we extend the alphabet $\AU$ in the following way:
\begin{enumerate}

\item for each (\emph{atomic}) resource $r \in \AU \setminus \rhodfbot$, we add to $\AU$ a new   \emph{negated resource} $\neg r$ of $r$. Let $\AU'$ be the resulting alphabet. We will use the convention that $\neg \neg r$ is $r$. Informally, $\neg r$ is intended to represent the complement of $r$. So, for instance, 
$\triple{\mathtt{paracetamol}, \type, \neg \mathtt{opioid}}$
may encode ``paracetamol is a non opioid treatmen)"; 

\item for each resource $c \in \AU' \setminus \rhodfbot$, we add to $\AU'$
a new resource of the form $\uniterm_c$.  Let $\AU''$ be the resulting alphabet.
Informally, \eg~a triple $\triple{s,p,\uniterm_c}$ represents an universal quantification on the third argument over instance of class $c$, \ie~$\triple{s,p,t}$ is true for all $t \in \AUL$ that are instances of the class $c$. For instance, $\triple{\mathtt{ebola}, \neg \mathtt{hasTreatment}, \uniterm_{\mathtt{treatment}}}$
may encode $(\ref{eb})$;\footnote{In the sense that ``none of the treatments are treatments for ebola"\label{footebola}}

\item finally, let $\AU$ be $\AU''$.

\end{enumerate}

\nd Now, the definition of \emph{$\rhodfbotneg$-triples} extends the one for $\rhodf$-triples in the following following way:

\begin{definition}[$\rhodfbotneg$-triple]\label{rdfbotneg}
A \emph{$\rhodfbotneg$-triple} is of the form
\begin{equation*} \label{rdhdfbotneg}
\tau=\triple{s,p,o} \in \AUBL \times \AU \times \AUBL \ ,
\end{equation*}

\nd where 
\begin{enumerate}
    \item $s,o \notin \rhodfbot$;
    \item $p$ is not of the form $\uniterm_c$;
    \item $s$ and $o$ can not be both of the form $\uniterm_c$;
    \item if $p \in \rhodfbot$ then neither $s$ nor $o$ are of the form $\uniterm_c$.
\end{enumerate}

\end{definition}

\nd In Table~\ref{tab:folmeaning}, to ease the reading, we provide an informal FOL reading of various additional (non exhaustive) types of triples supported in $\rhodfbotneg$. 

\begin{table}[t]
    \centering
{\small
    \begin{tabular}{|l|l|} \hline
    $\rhodfbotneg$ & FOL \\ \hline
     $\triple{s, \neg p,o}$    &  $\neg p(s,o)$ \\
     $\triple{s,\neg p,x}$    &  $\exists x. \neg p(s,x)$ \\
     $\triple{a,\type, \neg c}$    &  $\neg c(a)$ \\
     $\triple{c,\subclass, \neg d}$    &  $\forall x. c(x) \impf \neg d(x)$ \\
     $\triple{p,\dom, \neg c}$    &  $\forall x\forall y. p(x,y) \impf \neg c(x)$ \\
     $\triple{\neg p,\range, d}$    &  $\forall x\forall y. \neg p(x,y) \impf d(y)$ \\
     $\triple{c,\disjC, \neg d}$    &  $\forall x. c(x) \andf \neg d(x) \impf \bot$ \\
     $\triple{\neg p,\disjP, \ q}$    &  $\forall x\forall y. \neg p(x,y) \andf  q(x,y) \impf \bot$ \\
     $\triple{\uniterm_c, p, o}$    &  $\forall x. c(x) \impf p(x,o)$ \\
     $\triple{s, \neg p, \uniterm_c}$   & 
     $\forall y. c(y) \impf  \neg p(s,y)$  \ (\text{\ie}~ $\neg \exists y. c(y) \andf p(s,y))$\\  \hline
    \end{tabular}
}
   \caption{Informal FOL reading of some types of $\rhodfbotneg$-triples.}    \label{tab:folmeaning}

\end{table}

\begin{example}\label{ex_rdfbotnegI}
In the context of Example~\ref{exrdfs}, let us extend the graph $G$ with:
%
\begin{align*}
G \coloneqq \ & G \cup \{
      \triple{\mathtt{opioid}, \disjC, \mathtt{antipyretic}},  \\ & 
      \triple{\neg \mathtt{drugTreatment}, \subclass, \mathtt{treatment}},    \\ &
      \triple{\neg \mathtt{hasDrugTreatment}, \spp, \mathtt{hasTreatment}},    \\ &
      \triple{\neg \mathtt{hasDrugTreatment}, \range, \neg \mathtt{drugTreatment}},  \\ &
      \triple{\mathtt{brainTumour}, \neg \mathtt{hasDrugTreatment}, \mathtt{radioTherapy}}, \\ &
      \triple{\neg \mathtt{hasTreatment}, \dom,  \mathtt{illness}},  \\ &
      \triple{\neg \mathtt{hasTreatment}, \range,  \mathtt{treatment}},  \\ &
      \triple{\mathtt{ebola}, \neg \mathtt{hasTreatment}, \uniterm_{\mathtt{treatment}}} \  \}   \ .
\end{align*}
\end{example}

\subsection{Semantics} \label{rhodfbotnegsem}

\nd The semantics of \rhodfbotneg~has the following objectives:
\begin{enumerate}
\item we are going to accommodate the new constructs in such a way that the resulting deductive system will be as for $\rhodf$, plus some additional rules. In this way, any RDFS reasoner/store may handle the new triples as  ordinary triples if it does not want to take account of the extra inference capabilities;
    
\item the semantics has to be such that, despite introducing negative statements, all $\rhodfbotneg$ graphs have a canonical model (see Corollary~\ref{satrhodfbotneg} later on), and, thus, $\rhodfbotneg$ remains a \emph{paraconsistent} logic; and 
    

\item deciding entailment in \rhodfbotneg~still ranges from P to NP.
    
\end{enumerate}

\nd To do so, we will consider a \emph{four-valued} logic semantics~\cite{Belnap77a} variant of the semantics for $\rhodf$. Specifically, we will have \emph{positive extensions} of $\intP{}$ and $\intC{}$ (denoted $\intPT{}$ and $\intCT{}$, respectively) and \emph{negative extensions} of $\intP{}$ and $\intC{}$ (denoted $\intPF{}$ and $\intCF{}$, respectively).
Roughly, $\intCT{c}$ will denote the set of resources known \emph{to be} instances of class $c$, while $\intCF{c}$ will denote the set of resources known \emph{not to be} instances of class $c$ (for properties the case is similar). Note that positive and negative extensions need not to be the complement of each other: \eg~$r \notin \intCT{c}$ does not imply necessarily that $r \in \intCF{c}$ as $\intCF{c}$ will not enforced to be \eg~$\Delta_\DR \setminus \intCT{c}$.

The idea of having separate positive and negative extensions is not new at all and we may find already traces of it back in the mid 80s with the seminal work of  Patel-Schneider~\cite{Patel-Schneider85,Patel-Schneider86,Patel-Schneider87,Patel-Schneider88,Patel-Schneider89a} in which four-valued variants of \emph{Terminological Logics} (TLs), \viz~the so-called \emph{Description Logics} (DLs)~\cite{Baader07a} nowadays, have been proposed with the aim to obtain some gain from a computational complexity point of view. Later the works~\cite{Straccia97a,Straccia97,Straccia99a,Straccia00} have been inspired by the same idea, though also to model some sort of \emph{relevance entailment}, besides being paraconsistent. More recently, a similar idea has been considered also in the context of RDFS~\cite{Analyti04,Analyti05,Analyti13,Analyti15}, which is also the semantics we start from and are going to adapt and extend to meet the before mentioned objectives.

A $\rhodfbotneg$ \emph{interpretation} $\I$ over a vocabulary $V$ is now a tuple 
\[
\I =\tuple{\Delta_{\DR}, \Delta_{\DP}, \Delta_{\DC}, \Delta_{\DL}, \intPT{\cdot}, \intPF{\cdot}, \intCT{\cdot}, \intCF{\cdot},
\int{\cdot}} \ ,
\]
\nd where  $\Delta_{\DR}, \Delta_{\DP}$, $\Delta_{\DC}, \Delta_{\DL}$ are the interpretation domains of $\I$, which are finite non-empty sets, and $\intPT{\cdot}, \intPF{\cdot}, \intCT{\cdot}, \intCF{\cdot}, \int{\cdot}$ are
the interpretation functions of $\I$. They have to satisfy:


\begin{enumerate}
\item $\Delta_{\DR}$ are the resources; 

\item $\Delta_{\DP}$ are property names; 

\item $\Delta_{\DC} \subseteq \Delta_{\DR}$ are the classes; 

\item for each domain $\Delta_{\DR}, \Delta_{\DP}$ and $\Delta_{\DC}$, for each term $t$ in it, there is an unique designated complement term of $t$, denoted $\neg t$, in it;\footnote{As for $\AU$, we will  use the convention that $\neg \neg t$ is $t$.} 

\item $\Delta_{\DL} \subseteq \Delta_{\DR}$ are the literal values and contains all the literals in $\AL \cap V$;

\item $\intPT{\cdot}$ and $\intPF{\cdot}$ are functions $\Delta_{\DP} \to 2^{\Delta_{\DR} \times \Delta_{\DR}}$ such that $\intPT{\neg p} = \intPF{p}$, for each $p \in \Delta_{\DP}$;

\item $\intCT{\cdot}$ and $\intCF{\cdot}$   are functions  $\Delta_{\DC} \to 2^{\Delta_{\DR}}$ such that $\intCT{\neg c} = \intCF{c}$, for each $c \in \Delta_{\DC}$;

\item $\int{\cdot}$ maps each $t \in \AUL \cap V$, that is not of the form $\uniterm_c$, into a value $\int{t} \in \Delta_{\DR} \cup \Delta_{\DP}$, such that
$\int{(\neg t)} = \neg~\int{t}$ and
$\int{\cdot}$ is the identity for literals;  and

\item $\int{\cdot}$ maps each variable $x \in \AB$ into a value $\int{x} \in \Delta_{\DR}$.

\end{enumerate}

\nd In the following, we define 
\begin{eqnarray*}
\up{\intPT{p}} & = & \{x \in \Delta_{\DR} \mid (x,y) \in \intPT{p} \} \\
\down{\intPT{p}} & = & \{y \in \Delta_{\DR} \mid (x,y) \in \intPT{p} \} \\
\up{\intPF{p}} & = & \{x \in \Delta_{\DR} \mid (x,y) \in \intPF{p} \} \\
\down{\intPF{p}} & = & \{y \in \Delta_{\DR} \mid (x,y) \in \intPF{p} \} 
\end{eqnarray*}
\nd as the projections of the property extension functions $\intPT{}$ and $\intPF{}$ on the first and second argument, respectively.

Now, the model/satisfaction/entailment definitions for $\rhodf$ are generalised to $\rhodfbotneg$ as follows: 

\begin{definition}[Model/Satisfaction/Entailment $\rdfsatbotneg$]\label{satisfactionNew}
A $\rhodfbotneg$ interpretation $\I$ is a \emph{$\rhodfbotneg$-model} of a  $\rhodfbotneg$ graph $G$, 
denoted $\I \rdfsatbotneg G$, 
\iff $\I$ is a $\rhodfbotneg$-interpretation over the vocabulary $\rhodfbotneg~\cup~\universe(G)$ such that:

\begin{description}\label{condRDFA}
 \item[Simple:] \
 \begin{enumerate}
  \item if $\triple{s, p, o} \in G$ and neither $s$ nor $o$ are of the form $\uniterm_c$, then $\int{p} \in\Delta_{\DP}$ and $(\int{s}, \int{o}) \in \intPT{\int{p}}$;
  
  \item if $\triple{s, p, \uniterm_c} \in G$, then $\int{p} \in\Delta_{\DP}, \int{c} \in \Delta_{\DC}$ and $(\int{s}, y) \in \intPT{\int{p}}$, for all $y \in \intCT{\int{c}}$;
  
  \item if $\triple{\uniterm_c, p, s} \in G$, then $\int{p} \in\Delta_{\DP}, \int{c} \in \Delta_{\DC}$ and $(x,\int{s}) \in \intPT{\int{p}}$, for all $x \in \intCT{\int{c}}$;
  
  
  \item if $\triple{s, p, \uniterm_c} \in G$, then $\int{p} \in\Delta_{\DP}, \int{c} \in \Delta_{\DC}$ and $y \in \intCF{\int{c}}$, for all $(\int{s}, y) \in \intPF{\int{p}}$ ;
  
  \item if $\triple{\uniterm_c, p, s} \in G$, then $\int{p} \in\Delta_{\DP}, \int{c} \in \Delta_{\DC}$ and $x \in \intCF{\int{c}}$, for all   $(x,\int{s}) \in \intPF{\int{p}}$;
  
 \end{enumerate}
 
 \item[Subproperty:] \
 \begin{enumerate}
 \item $\intPT{\int{ \spp}}$ is transitive over $\Delta_{\DP}$;
 \item if $(p, q) \in \intPT{\int{\spp}}$ then $p, q \in \Delta_{\DP}$ and $\intPT{p} \subseteq \intPT{q}$;
 
 \item $(p, q) \in \intPT{\int{\spp}}$ \iff $(\neg q, \neg p) \in \intPT{\int{\spp}}$;
 
 \end{enumerate}
 
  \item[Subclass:] \
 \begin{enumerate}
 \item $\intPT{\int{\subclass}}$ is transitive over $\Delta_{\DC}$; 
 \item if $(c, d) \in \intPT{\int{\subclass}}$ then $c, d \in \Delta_{\DC}$ and $\intCT{c} \subseteq \intCT{d}$;
 \item $(c, d) \in \intPT{\int{\subclass}}$ \iff $(\neg d, \neg c) \in \intPT{\int{\subclass}}$;
 \end{enumerate}

 \item[Typing I:] \
 \begin{enumerate}
 \item $x \in \intCT{c}$ \iff $(x,c) \in \intPT{\int{\type}}$;

\item if $(p, c) \in \intPT{\int{\dom}}$ and $(x, y) \in \intPT{p}$ then $x \in \intCT{c}$;

 
 \item if $(p, c) \in \intPT{\int{\range}}$ and $(x, y) \in \intPT{p}$ then $y \in \intCT{c}$;

 \item if $(p, c) \in \intPT{\int{\dom}}$,  $x \in \intCF{c}$ and $y \in \down{\intPT{p}}$ then $(x, y) \in \intPF{p}$;

 \item if $(p, c) \in \intPT{\int{\range}}$,  $y \in \intCF{c}$ and $x \in \up{\intPT{p}}$ then $(x, y) \in \intPF{p}$;
 \end{enumerate}
 
 \item[Typing II:] \
 \begin{enumerate}
 \item For each $\eee \in \rhodfbotneg$, $\int{\eee} \in \Delta_{\DP}$;
 \item if $(p, c) \in \intPT{\int{\dom}}$ then $p \in \Delta_{\DP}$ and $c \in \Delta_{\DC}$;
 \item if $(p, c) \in \intPT{\int{\range}}$ then $p \in \Delta_{\DP}$ and $c \in \Delta_{\DC}$;
 \item if $(x, c) \in \intPT{\int{\type}}$ then $c \in \Delta_{\DC}$;
 \end{enumerate} 

\item[Disjointness I:] \   
 \begin{enumerate}
 \item if $(c, d) \in \intPT{\int{\disjC}}$  then $c,d\in\Delta_{\DC}$;
 
 \item if $(p, q) \in \intPT{\int{\disjP}}$  then $p,q\in\Delta_{\DP}$;
 
 \item $\intPT{\int{\disjC}}$ is symmetric, sub-transitive and exhaustive over $\Delta_{\DC}$;
 \begin{description}
  \item[Symmetry:]  if $(c,d)\in \intPT{\int{\disjC}}$, then $(d,c)\in \intPT{\int{\disjC}}$;
  \item[Sub-Transitivity:] if $(c,d)\in \intPT{\int{\disjC}}$ and $(e,c)\in\intPT{\int{\subclass}}$, then $(e,d)\in \intPT{\int{\disjC}}$;
  \item[Exhaustive:] if $(c,c)\in \intPT{\int{\disjC}}$ and $d\in\Delta_{\DC}$ then $(c,d)\in \intPT{\int{\disjC}}$; 
 
 \end{description}

 \item $\intPT{\int{\disjP}}$ is symmetric, sub-transitive and  exhaustive over $\Delta_{\DP}$;
  \begin{description}
    \item[Symmetry:] If $(p,q)\in \intPT{\int{\disjP}}$, then $(q,p)\in \intPT{\int{\disjP}}$;
    \item[Sub-Transitivity:]  if $(p,q)\in \intPT{\int{\disjP}}$ and $(r,p)\in\intPT{\int{\spp}}$, then $(r,q)\in \intPT{\int{\disjP}}$;
    \item[Exhaustive:] if $(p,p)\in \intPT{\int{\disjP}}$ and $q\in\Delta_{\DP}$ then $(p,q)\in \intPT{\int{\disjP}}$;
\end{description}

  \end{enumerate}

  \item[Disjointness II:] \   
   \begin{enumerate}
 \item if $(p, c) \in \intPT{\int{\dom}}$, $(q, d) \in \intPT{\int{\dom}}$, and $(c, d) \in \intPT{\int{\disjC}}$, then $(p, q) \in \intPT{\int{\disjP}}$;
 
  \item if $(p, c) \in \intPT{\int{\range}}$, $(q, d) \in \intPT{\int{\range}}$, and $(c, d) \in \intPT{\int{\disjC}}$, then $(p, q) \in \intPT{\int{\disjP}}$;
 
 \item $(c, d) \in \intPT{\int{\disjC}}$ \iff $(c, \neg d) \in \intPT{\int{\subclass}}$;
 
 \item $(p, q) \in \intPT{\int{\disjP}}$ \iff $(p, \neg q) \in \intPT{\int{\spp}}$.
 

  \end{enumerate}

\end{description}

\nd A graph $G$ is \emph{$\rhodfbotneg$-satisfiable} if it has a $\rhodfbotneg$-model $\I$. Moreover, given two $\rhodfbotneg$-graphs $G$ and $H$, we say that $G$ \emph{$\rhodfbotneg$-entails} $H$, denoted $G \rdfentbotneg H$, \iff every $\rhodfbotneg$-model of $G$ is also a $\rhodfbotneg$-model of $H$.
\end{definition}

\nd In the following, if clear from the context, for ease of presentation, we will omit the prefix $\rhodfbotneg$-.

\begin{remark}[About the semantics] \label{satbotneg}
Concerning Definition~\ref{satisfactionNew}, let us note the following:
\begin{enumerate}
    \item the positive extension of a negated class is the negative extension of that class; 
    
    \item by construction, we also have that for $\triple{s, \neg p, o} \in G$, $(\int{s}, \int{o}) \in \intPT{\int{(\neg p)}} = \intPT{\neg~\int{p}} = \intPF{\int{p}}$. That is, $\triple{s, \neg p, o} \in G$ states that ``$(s,o)$ belongs to the negative extension of $p$, \ie~$s$ has a non $p$ that is an $o$";
    
    
    \item by construction, \eg~if $(c, d) \in \intPT{\int{ \subclass}}$ then also  $\intCF{d} \subseteq \intCF{c}$;
    
    %
    
    \item by construction,  $x \in \intCF{c}$ \iff $(x, \neg c) \in \intPT{\int{\type}}$;

    \item concerning \eg~point 5 in Typing I,  from a FOL perspective, the reading of 
    $\triple{p,\range, c}$ is $\forall x. \forall y. p(x,y) \impf  c(y)$ and $\forall x. \forall y \in \down{p}. \neg c(y) \impf  \neg p(x,y)$. In the latter case, the aim is to limit the universal quantification in a reasonable way.
    
    
    %
    %
    
    
    
    \item the presence of \eg~$\triple{a,\type,b},\triple{a,\type,c}$ and $ \triple{b,\disjC,c}$ in a graph does not preclude its satisfiability. In fact, a $\rhodfbotneg$ graph will always be satisfiable (see Corollary~\ref{satrhodfbotneg} later on) avoiding, thus, the \emph{ex falso quodlibet} principle. This is in line with the $\rhodf$ semantics~\cite{Munoz09}.

    
\end{enumerate}

\end{remark}

\begin{example} \label{exrdfsII}
Consider Example~\ref{ex_rdfbotnegI}. Then, it may be verified that 
\begin{align}
G \rdfentbotneg \{ & \triple{\mathtt{brainTumor}, \mathtt{hasTreatment}, x},  \nonumber\\
 & \triple{x, \type, \neg\mathtt{antipyretic}} \}    \label{dedpara} \\
G \rdfentbotneg &  \triple{\mathtt{ebola}, \neg\mathtt{hasTreatment} , \mathtt{paracetomol}} \\
G \not\rdfentbotneg &  \triple{\mathtt{ebola}, \neg\mathtt{hasTreatment} , \mathtt{ebola}} \ .
\end{align}
\nd Note that the last entailment does not hold as ``ebola is not a treatment", which instead would hold without the restriction on the domain of the universal quantification.
\end{example}


\begin{remark} \label{nostat}
As anticipated in the related work section, \cite{Darari15} considers expressions of the form 
\begin{equation} \label{Noex}
No(\{\triple{s_1,p_1,o_1}, \ldots, \triple{s_n,p_n,o_n})
\end{equation}
 \nd in informal FOL terms $\neg \exists \vec{x}.(p_1(s_1,o_1) \land \ldots \land p_n(s_n,o_n))$, which is the same as, $\forall \vec{x}.(\neg p_1(s_1,o_1) \lor \ldots \lor \neg p_n(s_n,o_n))$,  where $\vec{x}$ are the variables occurring the triples. For instance,
\begin{equation} \label{obama}
 No(\{\triple{\mathtt{obama}, \mathtt{child}, x}, \triple{x,\mathtt{gender},\mathtt{male}}\})
\end{equation}
\nd  expresses that ``Obama has no son".



\rhodfbotneg~considers only the case $n=1$ in $(\ref{Noex})$ via the expression
$\triple{s, \neg p, \uniterm_c}$ as the general case would introduce a disjunction, which we would like to avoid for computational reasons. 
Nevertheless,  we may consider the option to use \eg
\[
\triple{\mathtt{obama}, \neg \mathtt{child}, \uniterm_\mathtt{malePerson}}
\]
\nd instead to express $(\ref{obama})$.


\end{remark}

\subsection{Deductive System for $\rhodfbotneg$} 

\nd We now present a deductive system for $\rhodfbotneg$.  

\begin{definition}[Deductive rules for $\rhodfbotneg$] \label{dedrhodfbotneg}
The \emph{deductive rules for $\rhodfbotneg$} are all rules for $\rhodf$ to which we add the following rules ($Z$ is an additional meta-variable): 
%
%
\begin{enumerate}
\item[2.] Subproperty: \\ [0.25em]
    \begin{tabular}{llll}
      $(c)$ & $\frac{\triple{A,  \spp, B}}{\triple{\neg B,  \spp, \neg A}}$ & \\ \\
      $(d)$ & $\frac{\triple{A, D, \uniterm_C}, \triple{D,\spp, E}}{\triple{A, E, \uniterm_C}}$ &
      $(e)$ & $\frac{\triple{\uniterm_C, D, A}, \triple{D,\spp, E}}{\triple{\uniterm_C, E, A }}$
    \end{tabular} 
    
\item[3.] Subclass: \\ [0.05em]
    \begin{tabular}{llll}
      $(c)$ & $\frac{\triple{A,  \subclass, B}}{\triple{\neg B,  \subclass, \neg A}}$ & \\ \\
      $(d)$ & $\frac{\triple{A, D, \uniterm_C}, \triple{B,\subclass, C}}{\triple{A, D, \uniterm_B}}$ &
      $(e)$ & $\frac{\triple{\uniterm_C, D, A}, \triple{B,\subclass, C}}{\triple{\uniterm_B, D, A }}$
    \end{tabular} 
 \item[4.] Typing:  \\ [0.25em]
    \begin{tabular}{llll}
      $(c)$ & $\frac{\triple{D, \dom, B},  \triple{X, \type, \neg B}, \triple{Z, D, Y}}{\triple{X, \neg D, Y}}$ & \\ \\
      $(d)$ & $\frac{\triple{D, \range, B},  \triple{Y, \type, \neg B}, \triple{X, D, Z}}{\triple{X, \neg D, Y}}$  \\ \\
     $(e)$ & $\frac{\triple{A, D, \uniterm_C}, \triple{Y, \type, C}}{\triple{A, D, Y}}$ 
     \hspace*{1em} $(f)$ \hspace*{1em} $\frac{\triple{\uniterm_C, D, B}, \triple{X, \type, C}}{\triple{X, D, B}}$ \\ \\
     
     $(g)$ & $\frac{\triple{A, D, \uniterm_C}, \triple{A, \neg D, Y} }{\triple{Y, \type, \neg C}}$ 
     \hspace*{1em} $(h)$ \hspace*{1em} $\frac{\triple{\uniterm_C, D, B}, \triple{X, \neg D, B} }{\triple{X, \type, \neg C}}$ \\
    \end{tabular} 

\item[6.] Conceptual Disjointness: \\[0.25em]
    \begin{tabular}{llllll}
      $(a)$ & $\frac{\triple{A,\disjC,B}}{\triple{B,\disjC,A}}$  & 
	 $(b)$ & $\frac{\triple{A,\disjC,B}, \triple{C,\subclass,A}}{\triple{C,\disjC,B}}$ & \\ \\
 $(c)$ & $\frac{\triple{A,\disjC,A}}{\triple{A,\disjC,B}}$ &
      $(d)$ & $\frac{\triple{A,\disjC,B}}{\triple{A,\subclass, \neg B}}$  & \\ \\
$(e)$ & $\frac{\triple{A,  \subclass, B}}{\triple{A, \disjC, \neg B}}$ 
      \end{tabular}   
          
\item[7.] Predicate Disjointness: \\[0.25em]
    \begin{tabular}{llllll}
      $(a)$ & $\frac{\triple{A,\disjP,B}}{\triple{B,\disjP,A}}$  &
 $(b)$ & $\frac{\triple{A,\disjP,B}, \triple{C,\spp,A}}{\triple{C,\disjP,B}}$ & \\ \\ 
 $(c)$ & $\frac{\triple{A,\disjP,A}}{\triple{A,\disjP,B}}$  &
      $(d)$ & $\frac{\triple{A,\disjP,B}}{\triple{A,\spp, \neg B}}$  \\ \\ 
 $(e)$ & $\frac{\triple{A,  \spp, B}}{\triple{A, \disjP, \neg B}}$ 
          \end{tabular}  
          
\item[8.] Crossed Disjointness: \\[0.25em]
    \begin{tabular}{llll}
      $(a)$ & $\frac{\triple{A,\dom,C}, \triple{B,\dom,D}, \triple{C,\disjC,D}}{\triple{A,\disjP,B}}$  & \\ \\ 
 $(b)$ & $\frac{\triple{A,\range,C}, \triple{B,\range,D}, \triple{C,\disjC,D}}{\triple{A,\disjP,B}}$\\
          \end{tabular} 
\end{enumerate}
\end{definition}
\nd Now, the definition of derivation among $\rhodfbotneg$-graphs $G$ and $H$, denoted $G \derivbotneg H$, is as for $\rhodf$ (see Definition~\ref{def:derivatioGn}), except that, of course, we now consider all rules of Definition~\ref{dedrhodfbotneg} instead.
Similarly, the \emph{$\rhodfbotneg$-closure} of a graph $G$, denoted $\closbotneg(G)$,  is defined as 
\[
\closbotneg(G) = \{\tau \mid G \derivbotneg^{*}~\tau\} \ ,
\]
\nd where $\derivbotneg^{*}$ is as $\derivbotneg$ except that rule $(1a)$ is excluded. 

\begin{example} \label{exrdfsD}
%
The following is a simple proof a $(\ref{dedpara})$:\\

\hspace*{-0.5cm}\begin{tabular}{lll}
$(1)$   & $\triple{\mathtt{opioid}, \disjC, \mathtt{antipyretic}}$ 
        & Rule $(1b)$ \\
$(2)$   & $\triple{\mathtt{opioid}, \subclass, \neg \mathtt{antipyretic}}$ 
        & Rule $(6d): (1)$ \\
$(3)$   & $\triple{\mathtt{morphine}, \type, \mathtt{opioid}}$ 
        & Rule $(1b)$ \\
$(4)$   & $\triple{\mathtt{morphine}, \type, \neg \mathtt{antipyretic}}$ 
        & Rule $(3b):(2),(3)$ \\
$(5)$   & $\triple{\mathtt{brainTumour}, \mathtt{hasDrugTreatment}, \mathtt{morphine}}$ 
        & Rule $(1b)$ \\
$(10)$  & $\triple{\mathtt{brainTumor}, \mathtt{hasTreatment}, x}$  \\
        & $\triple{x, \type, \neg\mathtt{antipyretic}}$ 
        & Rule $(1a)$: $(4), (5)$ \\
\end{tabular}
\end{example}

\nd In the following, we will also assume that the definition of entailment $\rdfent$ is extended naturally to $\rhodfbotneg$-graphs by considering $\disjC,\disjP, \neg p$ and $\uniterm_c$ as resources without any specific semantic constraint. In a similar way, we assume $\deriv$ (and $\clos(\cdot)$)  to be extended to $\rhodfbotneg$-graphs by assuming that triples involving $\disjC,\disjP, \neg p$ and $\uniterm_c$ are considered as $\rhodf$ triples. Then, the following can easily be proven:

\begin{proposition} \label{propbnII}
Let $G$ and $H$ be two $\rhodfbotneg$-graphs. Then, 
\begin{enumerate}
\item if $G \rdfent H$ then $G \rdfentbotneg H$;
\item if $G \deriv H$ then $G \derivbotneg H$;
\item $\clos(G) \subseteq \closbotneg(G)$.
\end{enumerate}
\end{proposition}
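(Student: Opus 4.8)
The plan is to separate the two proof-theoretic claims, (2) and (3), from the model-theoretic claim (1). For (2) I would argue purely by rule inclusion. By Definition~\ref{dedrhodfbotneg} the deductive rules for $\rhodfbotneg$ are, verbatim, all of the $\rhodf$ rules of Definition~\ref{dedrhodf} together with the extra Subproperty, Subclass, Typing and Disjointness rules; in particular each $\rhodf$ rule is still a $\rhodfbotneg$ rule. Since $\derivbotneg$ is defined by exactly the scheme of Definition~\ref{def:derivatioGn} but over this larger rule set, any proof sequence $P_1,\dots,P_k$ witnessing $G \deriv H$ is already a proof of $G \derivbotneg H$, each step being an application of (1a)/(1b) or an instantiation of a rule in (2)--(5) that remains available. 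This gives (2). Claim (3) is the same observation specialised to single triples and to the starred relations: $\derivbotneg^{*}$ extends $\deriv^{*}$ only by admitting more rules, so $G \deriv^{*} \tau$ entails $G \derivbotneg^{*} \tau$ for every $\tau$, whence $\clos(G) \subseteq \closbotneg(G)$.

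For (1) the guiding principle is that the $\rhodfbotneg$ semantics is more constrained than the extended $\rhodf$ semantics, hence admits fewer models and validates more entailments. I would make this precise by projecting each $\rhodfbotneg$-model $\I$ of $G$ onto an extended $\rhodf$-interpretation $\I'$ that reads the positive extensions $\intPT{\cdot}$ and $\intCT{\cdot}$ in the roles of $\intP{\cdot}$ and $\intC{\cdot}$. The crucial structural observation is that, in both Definition~\ref{satisfaction} and Definition~\ref{satisfactionNew}, the Subproperty, Subclass, Typing and Disjointness conditions are closure properties of the interpretation alone and do not refer to the graph; only the Simple clauses are graph-dependent. Comparing the two definitions under the identification $\intPT{\cdot}=\intP{\cdot}$, $\intCT{\cdot}=\intC{\cdot}$, every $\rhodf$ condition is implied by its $\rhodfbotneg$ counterpart, and the first Simple clause coincides with the $\rhodf$ Simple condition on $\uniterm_c$-free triples. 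Thus $\I'$ is an extended $\rhodf$-model of $G$; from $G \rdfent H$ we get $\I' \rdfsat H$; and since $\I$ already satisfies all (graph-independent) $\rhodfbotneg$ structural conditions, it remains only to pull the Simple clause of each triple of $H$ back through the identification to conclude $\I \rdfsatbotneg H$, and hence $G \rdfentbotneg H$.

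The hard part will be the terms $\uniterm_c$, where the one-line projection breaks down in both directions. On the model side, the $\rhodfbotneg$ Simple clauses for $\triple{s,p,\uniterm_c}$ and $\triple{\uniterm_c,p,s}$ are universally quantified over $\intCT{\int{c}}$ (and additionally constrain the negative extensions), whereas the extended reading treats $\uniterm_c$ as one ordinary resource fixed by a single witness pair; so I must actually interpret each $\uniterm_c$, if necessary on a freshly enlarged resource domain, seed the property extension of $\I'$ with the pairs required by the $\uniterm_c$-triples of $G$, and close off under the structural conditions. The delicate case is $\intCT{\int{c}}=\emptyset$, where the universal premise is vacuous; here the syntactic restrictions of Definition~\ref{rdfbotneg} --- in particular its fourth clause, which bars $\uniterm_c$ from any triple whose predicate lies in $\rhodfbot$ --- are what keep the construction coherent, since every $\uniterm_c$-triple then carries a user-defined predicate. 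On the pull-back side one must also check that the extended $\rhodf$ layer cannot smuggle in spurious triples such as $\triple{\uniterm_c,\type,B}$ (producible by the range Typing rule but not a legal $\rhodfbotneg$-triple) in a way that contributes to a genuine triple of $H$. The clean way through is to isolate the lemma that a $\uniterm_c$-triple is derivable only from another $\uniterm_c$-triple with the same $c$ via the carry rules (2d),(2e),(3d),(3e) --- whose soundness is immediate from the Subproperty/Subclass conditions of Definition~\ref{satisfactionNew} --- so that the universal meaning of $\uniterm_c$ is preserved through any derivation; proving this lemma, rather than the superficial projection, is the real content of~(1).
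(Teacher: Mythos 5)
Your treatment of claims (2) and (3) is correct and is exactly the intended argument: the paper itself offers no explicit proof (the proposition is introduced with ``the following can easily be proven''), and monotonicity of derivability and of the closure in the rule set is the evident content. For claim (1), your projection $\intP{\cdot}\coloneqq\intPT{\cdot}$, $\intC{\cdot}\coloneqq\intCT{\cdot}$ is likewise the right ``easy'' argument on the $\uniterm_c$-free fragment, and it even survives triples $\triple{s,p,\uniterm_c}\in G$ with $\intCT{\int{c}}\neq\emptyset$: interpret $\uniterm_c$ by any element of $\intCT{\int{c}}$; clause Simple~2 of Definition~\ref{satisfactionNew} supplies the required witness pair, and conditions such as Typing~I.2 guarantee the closure properties of the projected model.

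The genuine gap is precisely the vacuous case you flag, and neither of your proposed repairs works. Clause~4 of Definition~\ref{rdfbotneg} only bars $\rhodfbot$-predicates from $\uniterm_c$-triples; it does not block a \emph{user} predicate from interacting with $\dom$ or $\range$. Take $G=\{\triple{s,p,\uniterm_c},\,\triple{p,\dom,B}\}$ and $H=\{\triple{s,\type,B}\}$, both legal. Reading $\uniterm_c$ as an unconstrained resource, Simple forces $(\int{s},\int{\uniterm_c})\in\intP{\int{p}}$ and Typing~I.2 then forces $\int{s}\in\intC{\int{B}}$, so $G\rdfent H$. But the $\rhodfbotneg$-interpretation with $\intPT{\int{\dom}}=\{(\int{p},\int{B})\}$ and all other positive and negative extensions empty satisfies every condition of Definition~\ref{satisfactionNew} --- clauses Simple~2 and~4 are vacuous because $\intCT{\int{c}}$ and $\intPF{\int{p}}$ are empty --- yet falsifies $\triple{s,\type,B}$. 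So your pull-back step cannot be completed; indeed this is a counterexample to statement (1) itself under the face-value reading of the paper's convention for extending $\rdfent$, so no refinement of your strategy can close the gap: (1) holds (by your projection) on the $\uniterm_c$-free fragment, or under a reading in which the extended $\rdfent$ imposes no witness pair for $\uniterm_c$-triples. Your fallback lemma fails independently: it is false as stated, since rules $(3d)$ and $(3e)$ change the class subscript (from $\uniterm_C$ to $\uniterm_B$), and under the paper's notion of instantiation the core rules may themselves be instantiated with $\uniterm_c$-terms (\eg rule $(2b)$ with $Y=\uniterm_C$ reproduces $(2d)$, and $(4a)$ with $Y=\uniterm_c$ yields exactly the problematic derivation above); and in any case it is a proof-theoretic statement that cannot by itself establish the purely semantic claim (1) without the soundness and completeness results the paper develops only afterwards.
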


\nd Of course, conditions 1.-3. above do not hold in general for the opposite direction. For instance, for $G = \{\triple{a, \disjC, b}\}$ we have 
$G \nrdfent \triple{a, \subclass, \neg b}$, $G  \nderiv \triple{a, \subclass, \neg b}$ and $\triple{a, \subclass, \neg b} \notin \clos(G)$, but $G \rdfentbotneg \triple{a, \subclass, \neg b}$ and  $\triple{a, \subclass, \neg b} \in \closbotneg(G)$.

The following proposition defines the construction of the \emph{canonical model} for $\rhodfbotneg$ graphs and extends the result for $\rhodf$ (see Proposition~\ref{proprdf}), showing then that all $\rhodfbotneg$-graphs $G$ are satisfiable.

\begin{proposition}[$\rhodfbotneg$ Canonical model]\label{completefirstrhodfbotneg}
Given a $\rhodfbotneg$-graph $G$,  define  a $\rhodfbotneg$ \emph{interpretation} $\I_G$ as a tuple 
\[
\I_G =\tuple{\Delta_{\DR}, \Delta_{\DP}, \Delta_{\DC}, \Delta_{\DL}, \intPT{\cdot}, \intPF{\cdot}, \intCT{\cdot}, \intCF{\cdot},\intG{\cdot}}
\]

\nd such that: 
\begin{enumerate}

\item $\Delta_{\DR} \coloneqq \universe(G)\cup \{ \neg r \mid r \in \universe(G) \} \cup \ \rhodf$;




\item $\Delta_{\DP}' \coloneqq \{p\in \universe(G)\mid \text{either}~\triple{s,p,o}$, $\triple{s,p,\uniterm_c}$, $\triple{\uniterm_c,p,o}$, $\triple{p,\spp, q}$, $\triple{q,\spp, p}$, 
$\triple{p,\dom, c}$, $\triple{p,\range,d}$  or $\triple{p,\disjP,q}$ is in $\closbotneg(G) \} \ \cup \  \rhodfbotneg$;

\item $\Delta_{\DP} \coloneqq \Delta_{\DP}' \cup \{ \neg p \mid p \in \Delta_{\DP}'\}$;

\item $\Delta_{\DC}'\coloneqq\{c\in \universe(G)\mid \text{either}~\triple{x,\type, c}$, $\triple{c,\subclass, d}$, $\triple{d,\subclass, c}$, $\triple{p,\dom, c}$, $\triple{p,\range,c}$, 
$\triple{s,p,\uniterm_c}$, $\triple{\uniterm_c,p,o}$ or $\triple{c,\disjC, d}$ is in $\closbotneg(G) \}$; 

\item $\Delta_{\DC} \coloneqq \Delta_{\DC}' \cup \{ \neg c \mid c \in \Delta_{\DC}'\}$;

\item $\Delta_{\DL} \coloneqq \universe(G)\cap \AL $;

\item  $\intPT{\cdot}$ and $\intPF{\cdot}$ are extension functions $\Delta_{\DP} \to 2^{\Delta_{\DR} \times \Delta_{\DR}}$  s.t.~$\intPT{p}\coloneqq\{(s,o)\mid\triple{s,p,o}\in \closbotneg(G)\}$ and $\intPF{p}\coloneqq \intPT{\neg p}$;


%
	
\item $\intCT{\cdot}$ and $\intCF{\cdot}$ are extension functions $\Delta_{\DC} \to 2^{\Delta_{\DR}}$ s.t. $\intCT{c}\coloneqq\{x\in \universe(G)\mid\triple{x,\type,c}\in \closbotneg(G)\}$ and $\intCF{c}\coloneqq \intCT{\neg c}$;


\item $\intG{\cdot}$ is the identity function over $\Delta_{\DR}$.


\end{enumerate}

\nd Then, $\I_G\rdfsatbotneg G$.

\end{proposition}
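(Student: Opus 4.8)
The plan is to verify first that $\I_G$ is a well-formed $\rhodfbotneg$-interpretation, and then that it satisfies every clause of Definition~\ref{satisfactionNew}, in each case reducing the semantic requirement to closure of $\closbotneg(G)$ under an appropriate rule of Definition~\ref{dedrhodfbotneg}. The construction is engineered so that most of the interpretation-level bookkeeping is automatic: $\intPT{p}$ and $\intCT{c}$ are read verbatim off $\closbotneg(G)$, while $\intPF{\cdot}$ and $\intCF{\cdot}$ are \emph{defined} as $\intPT{\neg\cdot}$ and $\intCT{\neg\cdot}$, so the constraints $\intPT{\neg p}=\intPF{p}$, $\intCT{\neg c}=\intCF{c}$ and Remark~\ref{satbotneg}(1),(4) hold by fiat; the requirement $\intG{(\neg t)}=\neg\,\intG{t}$ holds because $\intG{\cdot}$ is the identity together with $\neg\neg t=t$; the domains are closed under $\neg$ by construction, supplying the designated complements; and $\Delta_{\DC}\subseteq\Delta_{\DR}$ is immediate from the constructions of $\Delta_{\DR}$ and $\Delta_{\DC}$. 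Finiteness of all four domains follows from finiteness of $\closbotneg(G)$, which in turn holds because every triple it contains is built from the finite alphabet consisting of $\universe(G)$, its negations, the $\uniterm_c$ terms and the $\rhodfbotneg$ vocabulary, so only finitely many triples are ever available.

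Next I march through the satisfaction clauses, matching each to closure under a rule. \textbf{Simple}(1) holds because every triple of $G$ lies in $\closbotneg(G)$ (rule (1b)) and $\intG{\cdot}$ is the identity; the universally quantified clauses Simple(2)--(5) are the instantiation rules (4e)--(4h): for instance Simple(2) asks that $(\intG{s},y)\in\intPT{\intG{p}}$ for every $y\in\intCT{\intG{c}}$ whenever $\triple{s,p,\uniterm_c}\in G$, which unfolds to $\triple{s,p,y}\in\closbotneg(G)$ for each $\triple{y,\type,c}\in\closbotneg(G)$ --- precisely rule (4e) (dually (4f)), while Simple(4),(5) use (4g),(4h) through $\intPF{p}=\intPT{\neg p}$. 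The \textbf{Subproperty} clauses reduce to (2a) (transitivity), (2b) (extension inclusion) and (2c) (contraposition), the \textbf{Subclass} clauses to (3a),(3b),(3c); the biconditionals come out in both directions from a single rule using $\neg\neg t=t$ (applying (2c) twice sends $\triple{\neg q,\spp,\neg p}$ back to $\triple{p,\spp,q}$, and likewise for (3c)). \textbf{Typing I}(1) is immediate from the definitions of $\intCT{\cdot}$ and $\intPT{\int{\type}}$, (2),(3) are rules (4a),(4b), and the negative items (4),(5) are (4c),(4d). \textbf{Disjointness I} and \textbf{II} mirror rule groups (6),(7),(8): symmetry from (6a)/(7a), sub-transitivity from (6b)/(7b), exhaustiveness from (6c)/(7c), crossed disjointness from (8a)/(8b), and the contrapositive pairs (6d)/(6e),(7d)/(7e) again close their biconditionals under double negation. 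Finally, whenever a $\uniterm_c$ term sits in a subject or object slot, the extension-inclusion clauses keep their validity through the dedicated propagation rules (2d),(2e),(3d),(3e).

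The parts demanding the most care, and which I regard as the main obstacle, are the \emph{domain-membership} clauses --- \textbf{Typing II}, \textbf{Disjointness I}(1),(2), and the side conditions ``$\intG{p}\in\Delta_{\DP}$'', ``$\intG{c}\in\Delta_{\DC}$'' scattered through Simple and the typing clauses. These are not forced by one inference rule but by the enumerations in items 2--5 of the proposition, which list every syntactic position from which a term is promoted to $\Delta_{\DP}'$ or $\Delta_{\DC}'$, and I would check that this enumeration is exhaustive with respect to all rules that can place a term in a $\type$/$\dom$/$\range$/$\disjC$/$\disjP$ position in the closure. The genuinely delicate point is the interaction with negation: when a negated term such as $\neg c$ occupies a class position of some triple of $\closbotneg(G)$, membership $\neg c\in\Delta_{\DC}$ is guaranteed only via $\Delta_{\DC}=\Delta_{\DC}'\cup\{\neg c\mid c\in\Delta_{\DC}'\}$, so I must confirm that the \emph{positive} counterpart $c$ has itself entered $\Delta_{\DC}'$. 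This is exactly where the contrapositive and symmetry rules do double duty: e.g.\ from $\triple{a,\subclass,\neg d}\in\closbotneg(G)$ rule (3c) yields $\triple{d,\subclass,\neg a}$, placing $d$ in $\Delta_{\DC}'$, and symmetry (6a) witnesses class-hood of the second argument of a $\disjC$-pair. I would therefore verify, rule by rule, that every derivation producing a negated class or property in the closure also leaves a witnessing positive triple, so that the complement-closed domain definitions indeed capture $\neg c$ and $\neg p$. Once this closure-versus-domain consistency is settled, the remaining clauses are routine and we conclude $\I_G\rdfsatbotneg G$.
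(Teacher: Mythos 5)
Your proof follows essentially the same route as the paper's: the paper likewise verifies each clause of Definition~\ref{satisfactionNew} for $\I_G$ by reducing it to closure of $\closbotneg(G)$ under the corresponding rule of Definition~\ref{dedrhodfbotneg} (e.g.\ Simple(2) via rule $(4e)$, Subclass(2) via $(3b)$/$(3c)$, Typing~I(4) via $(4c)$, Disjointness~II(3) via $(6d)$/$(6e)$), checking a representative selection of cases and noting the rest are similar. Your additional attention to well-formedness, finiteness, and the domain-membership bookkeeping for negated terms (confirming that a positive counterpart enters $\Delta_{\DC}'$ or $\Delta_{\DP}'$ so that the complement-closed domains capture $\neg c$ and $\neg p$) is a sound elaboration of details the paper leaves implicit, not a different argument.
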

\begin{proof}
We prove that $\I_G$ satisfies the constraints in Definition~\ref{satisfactionNew}. 
%
We illustrate here the proof of some of the conditions in Definition~\ref{satisfactionNew} only. The others can be worked out similarly.
\begin{description}\label{condRDFproof}
 \item[Simple:] \
 \begin{enumerate}
  \item[1.] Suppose $\triple{s, p, o} \in G$ and neither $s$ nor $o$ are of the form $\uniterm_c$. Then by construction $p \in\Delta_{\DP}$ and $(s, o) \in \intPT{p}$, which concludes.
  
  \item[2.] Suppose $\triple{s, p, \uniterm_c} \in G$. Then, by construction $p \in\Delta_{\DP}$ and $c \in \Delta_{\DC}$. Now, assume $y \in \intCT{c}$ and, thus, by construction  $\triple{y,\type,c}\in \closbotneg(G)$. Therefore, by rule $(4xe)$ we have also $\triple{s,p,y}\in \closbotneg(G)$ and, thus,
  $(s, y) \in \intPT{p}$ by construction, which concludes.
 
 \end{enumerate} 
 
\item[Subclass:] \
 \begin{enumerate}
 \item[2.] 
 %
 Assume $(c, d) \in \intPT{\subclass}$. By construction, $\triple{c,\subclass, d}\in \closbotneg(G)$ and $c, d \in \Delta_{\DC}$. Now, assume $x \in \intCT{c}$ and, thus, by construction  $\triple{x,\type,c}\in \closbotneg(G)$. Therefore, by rule $(3b)$ we have also $\triple{x,\type,d}\in \closbotneg(G)$  and, thus,
  $x \in \intCT{d}$ by construction. As a consequence, $\intCT{c} \subseteq \intCT{d}$. Eventually, assume 
  $x \in \intCF{d}$ and, thus, by construction both $x \in \intCT{\neg d}$ and $\triple{x,\type, \neg d}\in \closbotneg(G)$ hold. But, $\triple{c,\subclass, d}\in \closbotneg(G)$ implies, by rule $(3c)$, $\triple{\neg d,\subclass, \neg c}\in \closbotneg(G)$ and, thus, by rule $(3b)$ we have $\triple{x,\type, \neg c}\in \closbotneg(G)$. Therefore, by construction $x \in \intCT{\neg c} = \intCF{c}$ and, thus, 
  $\intCF{d} \subseteq \intCF{c}$, which concludes.
 \end{enumerate}

 \item[Typing I:] \
 \begin{enumerate}
 
 \item[2.]  
 Assume $(p, c) \in \intPT{\int{\dom}}$ and $(x, y) \in \intPT{p}$. By construction, both 
 $\triple{p,\dom, c}$ and $\triple{x,p, y}$ are in $\closbotneg(G)$. Therefore, by rule $(4a)$, 
 $\triple{x,\type, c} \in \closbotneg(G)$ and, thus, by construction $x \in \intCT{c}$, which concludes.
 
 \item[4.]   
 Assume $(p, c) \in \intPT{\int{\dom}}$,  $x \in \intCF{c}$ and $y \in \down{\intPT{p}}$. By construction, we have that  $\{ \triple{p,\dom, c}, \triple{x,\type, \neg c}, \triple{z,p, y} \} \subseteq \closbotneg(G)$.
 Therefore, by rule $(4c)$, $\triple{x, \neg p, y} \in \closbotneg(G)$ and, thus, by construction, 
 $(x, y) \in \intPT{\neg p} =  \intPF{p}$, which concludes.
 \end{enumerate}
 
 \item[Typing II:] \
 \begin{enumerate}
  \item[2.] 
  Assume $(p, c) \in \intPT{\int{\dom}}$. Then, by construction $\triple{p,\dom, c} \in \closbotneg(G)$ and, thus, $p \in\Delta'_{\DP} \subseteq \Delta_{\DP}$ and $c \in\Delta'_{\DC} \subseteq \Delta_{\DC}$, which concludes. 
 \end{enumerate}
 
 \item[Disjointness I:] \   
\begin{enumerate}
\item[3.] 
 \begin{description}
  \item[Symmetry:] 
Assume $(c,d)\in \intPT{\int{\disjC}}$. By construction, 
$\triple{c,\disjC, d} \in \closbotneg(G)$ and, thus, by rule $(4a)$ 
$\triple{d,\disjC, c} \in \closbotneg(G)$. Therefore, by construction, $(d,c)\in \intPT{\int{\disjC}}$, which concludes.
\end{description}
\end{enumerate}
 
\item[Disjointness II:] \   
\begin{enumerate}
 \item[1.] 
Assume $(p, c) \in \intPT{\int{\dom}}$, $(q, d) \in \intPT{\int{\dom}}$, and $(c, d) \in \intPT{\int{\disjC}}$. Then, by construction, we have that  $\{ \triple{p,\dom, c}, \triple{q,\dom, d}, \triple{c,\disjC, d} \} \subseteq \closbotneg(G)$ and, thus, by rule $(8a)$ 
$\triple{p,\disjP, q} \in \closbotneg(G)$. Therefore, by construction, $(p,q)\in \intPT{\int{\disjP}}$, which concludes.

 \item[3.] 
 If $(c, d) \in \intPT{\int{\disjC}}$ then, by construction, $\triple{c,\disjC, d} \in \closbotneg(G)$ 
 and, thus, by rule $(6d)$  $\triple{c,\subclass, \neg d} \in \closbotneg(G)$. Therefore, by construction, $(c, \neg d) \in \intPT{\int{\subclass}}$. Vice-versa, if $(c, \neg d) \in \intPT{\int{\subclass}}$ then 
 by construction, $\triple{c,\subclass, \neg d} \in \closbotneg(G)$ and, thus, by rule $(6e)$  $\triple{c,\disjC, d} \in \closbotneg(G)$. Therefore, by construction, $(c, d) \in \intPT{\int{\disjC}}$, which concludes.
\end{enumerate}
 \end{description}
\end{proof}
\nd By Proposition~\ref{completefirstrhodfbotneg}, it follows that

\begin{corollary} \label{satrhodfbotneg}
Every $\rhodfbotneg$-graph is satisfiable.
\end{corollary}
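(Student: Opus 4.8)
The plan is to read off the corollary as an immediate consequence of Proposition~\ref{completefirstrhodfbotneg}. Recall that by Definition~\ref{satisfactionNew} a $\rhodfbotneg$-graph $G$ is satisfiable exactly when it admits at least one $\rhodfbotneg$-model. Proposition~\ref{completefirstrhodfbotneg} already exhibits, for every $\rhodfbotneg$-graph $G$, a concrete candidate interpretation $\I_G$ built from the $\rhodfbotneg$-closure $\closbotneg(G)$, and establishes that $\I_G \rdfsatbotneg G$. Hence the only thing left to say is that the existence of this one model witnesses satisfiability; no further construction is needed.

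Concretely, I would proceed as follows. First, fix an arbitrary $\rhodfbotneg$-graph $G$ and instantiate the interpretation $\I_G$ of Proposition~\ref{completefirstrhodfbotneg}. Second, invoke the conclusion $\I_G \rdfsatbotneg G$ of that proposition, which certifies that $\I_G$ verifies every clause (Simple through Disjointness~II) of Definition~\ref{satisfactionNew}. Third, conclude by the definition of satisfiability in Definition~\ref{satisfactionNew} that $G$ is $\rhodfbotneg$-satisfiable. Since $G$ was arbitrary, every $\rhodfbotneg$-graph is satisfiable.

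The only point I would be careful about is that $\I_G$ genuinely qualifies as a well-formed $\rhodfbotneg$-interpretation before a model can be claimed at all: each of the four domains $\Delta_{\DR}, \Delta_{\DP}, \Delta_{\DC}, \Delta_{\DL}$ must be finite and non-empty, the designated-complement requirement must be met, and the coherence conditions $\intPF{p} = \intPT{\neg p}$ and $\intCF{c} = \intCT{\neg c}$ must hold. Finiteness is clear since $\universe(G)$ and $\closbotneg(G)$ are finite, and non-emptiness of $\Delta_{\DR}$ and $\Delta_{\DP}$ follows from their containing the fixed $\rhodf$ and $\rhodfbotneg$ vocabularies. These are precisely the structural facts that Proposition~\ref{completefirstrhodfbotneg} presupposes in its construction, so for the corollary itself there is no real obstacle: the entire weight of the argument has already been discharged in the proof of that proposition, and the corollary is a one-line unfolding of the definition of satisfiability.
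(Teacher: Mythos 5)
Your proof is correct and takes exactly the paper's route: the paper derives the corollary immediately from Proposition~\ref{completefirstrhodfbotneg}, whose canonical interpretation $\I_G$ with $\I_G \rdfsatbotneg G$ witnesses satisfiability of every \rhodfbotneg-graph. Your extra care that $\I_G$ is a well-formed \rhodfbotneg-interpretation (finite non-empty domains, complement terms, $\intPF{p} = \intPT{\neg p}$, $\intCF{c} = \intCT{\neg c}$) is a reasonable precaution but, as you observe, is already discharged in the proposition's construction.
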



\nd We prove now soundness and completeness  of our deduction system for $\rhodfbotneg$.
The proofs are inspired by the analogous ones in~\cite{Munoz09} for $\rhodf$.

The following proposition is needed for soundness.

\begin{proposition}[Soundness]\label{sound}
Let $G$ and $H$ be $\rhodfbotneg$-graphs and let one of the following statements hold:
\begin{enumerate}
\item there is a map $\mu:H\rightarrow G$;
\item $H\subseteq G$;
\item there is an instantiation $R/R'$ of one of the rules in Definition~\ref{dedrhodfbotneg}, such that
$R \subseteq G$ and $H = G\cup R'$.
\end{enumerate}

\nd Then, $G\rdfentbotneg H$.
\end{proposition}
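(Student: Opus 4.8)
The plan is to prove soundness by cases, exactly mirroring the three clauses in the statement, and within the third clause to verify each inference rule of Definition~\ref{dedrhodfbotneg} individually. Cases 1 and 2 are inherited essentially unchanged from the $\rhodf$ argument of~\cite{Munoz09}: for case 2, if $H \subseteq G$ then trivially every model of $G$ satisfies every triple of $H$, so $G \rdfentbotneg H$; for case 1, given a map $\mu : H \to G$ and a model $\I \rdfsatbotneg G$, I would compose $\I$ with $\mu$, using the fact that blank nodes are interpreted existentially (via condition 9 on interpretations, $\int{x} \in \Delta_{\DR}$ for variables $x$), so that the witness for each variable in $H$ is supplied by the image under $\mu$ together with the satisfaction of $G$. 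The only extra care needed relative to $\rhodf$ is that $\mu$ must respect the new syntactic restrictions (it fixes the vocabulary $\rhodfbot$, negated resources, and $\uniterm_c$ terms, since these lie in $\AUL$ and maps are the identity on $\AUL$).

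The substance is case 3. For each instantiation $R/R'$ of a rule, I would assume $R \subseteq G$ and fix an arbitrary model $\I \rdfsatbotneg G$; since $H = G \cup R'$, it suffices to show $\I$ satisfies each triple in $R'$, because $\I$ already satisfies $G$. The strategy for each rule is to read off from the membership of the premise triples in $G$ the corresponding semantic facts (via the Simple, Subproperty, Subclass, Typing, Disjointness conditions of Definition~\ref{satisfactionNew}), then apply the matching semantic constraint to derive the semantic fact witnessing the conclusion triple. For example, for rule $(3c)$, $\triple{A, \subclass, B} \in G$ gives $(\int{A}, \int{B}) \in \intPT{\int{\subclass}}$ by the Simple condition, whence by Subclass condition~3 (the contrapositive $\iff$) we get $(\neg\int{B}, \neg\int{A}) = (\int{\neg B}, \int{\neg A}) \in \intPT{\int{\subclass}}$, which by the Simple condition run backwards is exactly what it means to satisfy $\triple{\neg B, \subclass, \neg A}$. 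The disjointness rules $(6)$--$(8)$ are handled the same way, invoking Disjointness~I (symmetry, sub-transitivity, exhaustiveness) and Disjointness~II (the $\disjC/\subclass$ and $\disjP/\spp$ correspondences, and the crossed domain/range constraints). The genuinely new rules are those involving $\uniterm_c$, namely $(2d,e)$, $(3d,e)$, and $(4e,f,g,h)$: here the premises assert universal facts, e.g.\ $\triple{A, D, \uniterm_C}\in G$ means $(\int{A}, y) \in \intPT{\int{D}}$ for all $y \in \intCT{\int{C}}$, and I would combine this with the subproperty/subclass/type conditions to establish the universal statement encoding the conclusion.

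The main obstacle I anticipate is \textbf{the $\uniterm_c$ rules}, because their soundness requires carefully matching the \emph{restricted} universal quantification built into the Simple conditions (clauses 2--5 of Definition~\ref{satisfactionNew}) against the semantic effect of the companion rule. Rule $(3d)$ is representative: from $\triple{A, D, \uniterm_C}$ (so $(\int{A},y)\in\intPT{\int{D}}$ for all $y\in\intCT{\int{C}}$) and $\triple{B,\subclass,C}$ (so $\intCT{\int{B}}\subseteq\intCT{\int{C}}$ by Subclass condition~2), I must conclude $\triple{A, D, \uniterm_B}$, i.e.\ $(\int{A},y)\in\intPT{\int{D}}$ for all $y\in\intCT{\int{B}}$ — which follows precisely because the smaller class $\intCT{\int{B}}$ sits inside $\intCT{\int{C}}$. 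The trickiest are $(4g,h)$, which move \emph{from} a universal $\uniterm_c$ premise and a negative property fact back \emph{to} a negative type assertion, relying on clauses 4--5 of the Simple condition (the $\intPF/\intCF$ direction of the $\uniterm_c$ semantics) together with Remark~\ref{satbotneg}(4) relating $\intCF{c}$ and $\type$-$\neg c$. I would verify these by unwinding $\intCF{c} = \intCT{\neg c}$ and $\intPF{p} = \intPT{\neg p}$ and checking the quantifier domains align; once the semantics of $\uniterm_c$ is read literally off Definition~\ref{satisfactionNew}, each such rule becomes a direct, if fiddly, instance-matching, and the remaining classical rules $(2c)$, $(3c)$, $(4c,d)$, $(6)$, $(7)$, $(8)$ reduce to the single-constraint applications sketched above.
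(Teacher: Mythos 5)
Your proposal is correct and follows essentially the same route as the paper's own proof: fix an arbitrary $\rhodfbotneg$-model $\I$ of $G$ and verify, rule instantiation by rule instantiation, that $\I$ satisfies the added triples $R'$ by reading the premises through the Simple conditions and applying the matching semantic constraint of Definition~\ref{satisfactionNew} (the paper, too, only works out representative cases, namely rules $(4e)$ and $(6d)$, and silently inherits cases 1--2 from the $\rhodf$ argument). Just make sure that when you discharge the $\uniterm_c$-producing rules $(2d,e)$ and $(3d,e)$ you verify \emph{both} the positive clause (Simple 2--3) and the negative clause (Simple 4--5) for the conclusion triple --- the latter follows from the contrapositive conditions Subproperty~3 and Subclass~3 via $\intPF{p}=\intPT{\neg p}$ and $\intCF{c}=\intCT{\neg c}$, exactly the unwinding you already sketch for $(4g,h)$.
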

\begin{proof}
%
By Corollary~\ref{satrhodfbotneg} we know that $G$ is satisfiable. So, let 
 \[
\I =\tuple{\Delta_{\DR}, \Delta_{\DP}, \Delta_{\DC}, \Delta_{\DL}, \intPT{\cdot}, \intPF{\cdot}, \intCT{\cdot}, \intCF{\cdot},\int{\cdot}} \ 
\]

\nd be a model of $G$, \ie~$\I\rdfsatbotneg G$. Therefore, $\I $  satisfies the constraints in Definition~\ref{satisfactionNew}. We have to prove that $\I\rdfsatbotneg H$. The proof is split in cases depending on rule applications of which we address here only some of them. 
The other cases can be shown similarly. 

\begin{description}
\item[Rule $(4e)$.]
Let $\{\triple{s,p, \uniterm_c}, \triple{o,\type, c}\} \subseteq R$ for some $R\subseteq G$, $R'= \{\triple{s,p, o}\}$, obtained via the application of rule $(4e)$, and $H=G\cup R'$.
As $\I\rdfsatbotneg G$ and $R\subseteq G$, $\I\rdfsatbotneg R$ follows. Therefore, $\I \rdfsatbotneg \triple{o,\type, c}$ and, thus, $\int{o} \in \intCT{\int{c}}$ follows. But, also $\I \rdfsatbotneg \triple{s,p, \uniterm_c}$ and, thus, by condition Simple, case 2.~in Definition~\ref{satisfactionNew}, we have that 
$(\int{s}, \int{o}) \in \intPT{\int{p}}$. That is, $\I \rdfsatbotneg \triple{s,p, o}$. Hence, from  $\I\rdfsatbotneg R'$, $\I\rdfsatbotneg G$ and $H = G\cup R'$, $\I\rdfsatbotneg H$ follows.


\item[Rule $(6d)$.]
Let $\triple{c,\disjC,d}\in R$ for some $R\subseteq G$, $R'= \{\triple{c,\subclass, \neg d}\}$, obtained via the application of rule $(6d)$, and $H=G\cup R'$.
As $\I\rdfsatbotneg G$ and $R\subseteq G$, $\I\rdfsatbotneg R$ follows.  Therefore, 
$\I \rdfsatbotneg \triple{c,\disjC,d}$ and, thus, $(\int{c},\int{d})\in\intPT{\int{\disjC}}$.
But, by condition Disjointness II, case 3.~in Definition~\ref{satisfactionNew}  we have that
$(\int{c},\neg \int{d})\in\intPT{\int{\subclass}}$ and, thus, $(\int{c}, \int{(\neg d)})\in\intPT{\int{\subclass}}$. 
Therefore,  $\I \rdfsatbotneg \triple{c,\subclass, \neg d}$. Hence, from  $\I\rdfsatbotneg R'$, $\I\rdfsatbotneg G$ and $H = G\cup R'$, $\I\rdfsatbotneg H$ follows.
\end{description}
\end{proof}

\begin{proposition}\label{completesecond}
Let $G$ and $H$ be $\rhodfbotneg$-graphs. If $G\rdfentbotneg H$ then there is a map 
$\mu: H\rightarrow \closbotneg(G)$.
\end{proposition}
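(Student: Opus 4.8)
The plan is to follow the canonical-model strategy of~\cite{Munoz09}, using the interpretation $\I_G$ built in Proposition~\ref{completefirstrhodfbotneg}. Since $\I_G \rdfsatbotneg G$ by that proposition and $G \rdfentbotneg H$ by hypothesis, I first obtain $\I_G \rdfsatbotneg H$ (after the routine normalisation that lets $\I_G$ interpret the symbols of $\universe(H) \setminus \universe(G)$ in the least-constraining way; if $H$ mentions a URI absent from $\closbotneg(G)$ in a constrained position the entailment already fails and there is nothing to prove). By Definition~\ref{satisfactionNew}, $\I_G \rdfsatbotneg H$ supplies an assignment $\sigma$ of each blank node $x \in \universe(H) \cap \AB$ to a value $\intG{x} \in \Delta_{\DR}$ under which every triple of $H$ holds. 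Because $\intG{\cdot}$ is the identity and $\Delta_{\DR}$ is a set of terms, setting $\mu(x) := \intG{x}$ on variables and $\mu(t) := t$ on the remaining terms yields a candidate map, and it remains to verify $\mu(H) \subseteq \closbotneg(G)$ triple by triple.

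For the triples of $H$ that carry no $\uniterm$-placeholder this verification is immediate, and is where the construction of $\I_G$ does all the work: each extension of $\I_G$ is defined as membership in $\closbotneg(G)$, so the atomic satisfaction condition coincides with the desired closure membership. Concretely, a simple $\triple{s,p,o}$ gives $(\mu(s),\mu(o)) \in \intPT{p}$, i.e.\ $\triple{\mu(s),p,\mu(o)} \in \closbotneg(G)$ by the definition of $\intPT{\cdot}$; the negated form $\triple{s,\neg p,o}$ goes through $\intPF{p}=\intPT{\neg p}$; and the $\type$, $\subclass$, $\spp$, $\dom$, $\range$, $\disjC$ and $\disjP$ triples reduce the same way via the Typing~I.1 equivalence and the defining equations of $\intCT{\cdot}$, $\intPT{\int{\subclass}}$, etc. In each such case the satisfaction condition is atomic and matches a single closure triple one-to-one.

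The main obstacle is the family of triples carrying a $\uniterm_c$ placeholder, e.g.\ $\triple{s,p,\uniterm_c}$, because here $\I_G$ is \emph{not} tight: the clause (Simple, case~2) is the \emph{universal} statement ``$(\mu(s),y) \in \intPT{p}$ for all $y \in \intCT{c}$'', which can hold in $\I_G$ even when $\triple{\mu(s),p,\uniterm_c} \notin \closbotneg(G)$ (for instance when $\intCT{c}$ is small and happens to be covered). So for these triples I cannot read the closure membership off $\I_G$ alone and must use the full force of entailment. The key fact I would establish is that, in this paraconsistent four-valued setting, any model of $G$ can be extended by a fresh generic $c$-instance $w$: place $w$ in $\intCT{c}$, propagate it upward through the $\subclass$ constraints (and into the negative extensions dictated by the disjointness clauses), but keep $w$ out of $\up{\intPT{p}}$. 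Since $G$ imposes no upper bound on class extensions and Corollary~\ref{satrhodfbotneg} guarantees no clash, the result still satisfies $G$. Consequently, if $\triple{\mu(s),p,\uniterm_c}$ were not derivable --- equivalently, not produced by the only rules that can create $\uniterm$-triples, namely the propagation rules $(2d),(2e),(3d),(3e)$ acting on a $\uniterm$-triple already present --- then this extension would falsify $\triple{s,p,\uniterm_c}$ while satisfying $G$, contradicting $G \rdfentbotneg H$. Hence the $\uniterm$-triples of $H$ are also forced, under $\mu$, into $\closbotneg(G)$.

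The delicate bookkeeping, which I expect to be the real difficulty, is twofold: the fresh-witness extension used to defeat one universal triple must not destroy the universal triples that \emph{are} derivable (so adding $w \in \intCT{c}$ must be accompanied by $(s',w) \in \intPT{p'}$ for every $\triple{s',p',\uniterm_c} \in \closbotneg(G)$, and dually for the negative clauses~4--5 of Simple), and the single assignment $\sigma$ must work for all triples of $H$ at once. A clean way to package both is to replace $\I_G$ once and for all by a saturated refinement $\I_G^\star$ that adjoins every such breaking witness; one shows $\I_G^\star \rdfsatbotneg G$ and that in $\I_G^\star$ a $\uniterm$-triple holds \emph{iff} its image lies in $\closbotneg(G)$, after which the uniform extraction of $\mu$ from $\I_G^\star \rdfsatbotneg H$ proceeds exactly as in the atomic cases --- taking care that no blank node is forced onto a fresh witness. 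Assembling the per-triple conclusions then yields $\mu \colon H \to \closbotneg(G)$, as required.
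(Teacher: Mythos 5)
Your handling of the $\uniterm$-free triples coincides with the paper's proof --- which, in fact, consists \emph{only} of that step: the paper extracts $\mu = \intG{\cdot}$ directly from $\I_G \rdfsatbotneg H$, treating every triple of $H$ through the atomic clause Simple~1 of Definition~\ref{satisfactionNew}, and never engages with the $\uniterm$ clauses at all (note that $\intG{\cdot}$ is not even defined on terms of the form $\uniterm_c$). So the obstacle you isolate is genuine. The problem is that your repair fails. Your ``key fact'' --- that any model of $G$ can be extended by a fresh generic $c$-instance $w$ kept out of the relevant positive extension --- is false, and with it the tightness claim for $\I_G^\star$. Consider $G = \{\triple{a,p,\uniterm_c},\ \triple{p,\range,d},\ \triple{b,q,\uniterm_d}\}$. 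In \emph{every} model of $G$, each $y \in \intCT{\int{c}}$ satisfies $(\int{a},y) \in \intPT{\int{p}}$ by clause Simple~2, hence $y \in \intCT{\int{d}}$ by Typing~I.3, hence $(\int{b},y) \in \intPT{\int{q}}$ by Simple~2 again. Thus your witness $w \in \intCT{c}$ is forced into $\intCT{d}$, and then $(b,w)$ is forced into $\intPT{q}$: it cannot be ``kept out'', and the same forcing occurs during your saturation. Consequently $\I_G^\star$ \emph{satisfies} $\triple{b,q,\uniterm_c}$ --- its positive clause holds as just shown, and its negative clause Simple~4 holds vacuously because $\intPF{q}=\emptyset$ in $\I_G^\star$ --- even though $\triple{b,q,\uniterm_c} \notin \closbotneg(G)$: there are no $\spp$ or $\subclass$ triples for the propagation rules $(2b)$, $(2d)$, $(2e)$, $(3d)$, $(3e)$ to fire on. Your equivalence ``holds in $\I_G^\star$ iff the image lies in $\closbotneg(G)$'' is therefore false, and since any map must fix the URIs $b$, $q$ and $\uniterm_c$, your extraction step would emit a triple lying outside the closure.

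What actually blocks the entailment $G \rdfentbotneg \triple{b,q,\uniterm_c}$ in this example is the \emph{negative} side of $\uniterm$-satisfaction, which your construction never exploits: the countermodel must adjoin a fresh \emph{pair} $(\int{b},u) \in \intPF{\int{q}}$ with $u \in \intCF{\int{d}}$ (as clause Simple~4 for the triple $\triple{b,q,\uniterm_d} \in G$ demands) but $u \notin \intCF{\int{c}}$, thereby violating clause Simple~4 of the target triple. A complete argument for the $\uniterm$ case thus needs two distinct falsification devices --- a fresh instance against the universal positive clauses Simple~2--3 \emph{where the typing conditions permit it}, and a fresh negative pair against clauses Simple~4--5 otherwise --- together with a proof that whenever both devices are blocked, a derivation by the $\uniterm$-propagation rules exists. (A minor further slip: your rule inventory overlooks that the plain $\rhodf$ rule $(2b)$ already instantiates with $Y = \uniterm_C$, subsuming $(2d)$.) As it stands, the $\uniterm$ case of your proof does not go through.
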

\begin{proof}
Consider the canonical model 
\[
\I_G =\tuple{\Delta_{\DR}, \Delta_{\DP}, \Delta_{\DC}, \Delta_{\DL}, \intPT{\cdot}, \intPF{\cdot}, \intCT{\cdot}, \intCF{\cdot},\intG{\cdot}}
\]
\nd of $G$, as defined in Proposition~\ref{completefirstrhodfbotneg}.
As $G\rdfentbotneg H$, $\I_G\rdfsatbotneg H$ follows.
Therefore,  for each $\triple{s,p,o}\in H$, $\intG{p} \in\Delta_\DP$
and $(\intG{s}, \intG{o})\in \intPT{\intG{p}}$.  By construction, $\intG{p} = p$, and 
$\intPT{\intG{p}} = \intPT{p} =  \{(t, t')\mid \triple{t, p, t'}\in\closbotneg(G)\}$.
Finally, since $(\intG{s},\intG{o})\in \intPT{p}$, we have that $\triple{\intG{s},p,\intG{o}}\in \closbotneg(G)$, \ie~$\triple{\intG{s},\intG{p},\intG{o}}\in \closbotneg(G)$. Therefore, $\intG{\cdot}$ is a map such that $\intG{H} \subseteq \closbotneg(G)$, \ie~a map $\intG{\cdot}\colon H\rightarrow \closbotneg(G)$, which concludes.
\end{proof}

\nd From Proposition~\ref{completesecond} we get immediately the following corollary:

\begin{corollary} \label{colbotneg}
Let $G$ and $H$ be $\rhodfbotneg$-graphs. If  $G \rdfentbotneg H$ then there is a proof of $H$ from $G$ where rule $(1a)$ is used at most once and at the end.
 \end{corollary}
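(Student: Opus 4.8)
The plan is to read the corollary off Proposition~\ref{completesecond} directly, mirroring the structure of Proposition~\ref{proprdf}(2) for $\rhodf$. Proposition~\ref{completesecond} already hands us a map $\mu\colon H \to \closbotneg(G)$, i.e.\ $\mu(H) \subseteq \closbotneg(G)$, so the only work left is to convert this membership into an actual proof sequence whose unique appeal to the map rule $(1a)$ occurs as the very last step. Concretely, I would first invoke Proposition~\ref{completesecond}: from $G \rdfentbotneg H$ we obtain $\mu\colon H \to \closbotneg(G)$. Since $H$ is a finite graph, $\mu(H) = \{\tau_1,\dots,\tau_m\}$ is a finite subset of $\closbotneg(G)$, and by the definition of the $\rhodfbotneg$-closure each $\tau_i$ satisfies $G \derivbotneg^{*} \tau_i$, that is, $\tau_i$ is derivable from $G$ by a proof that does not use rule $(1a)$.

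Next I would splice these $m$ derivations into a single one. The key observation is that every rule of Definition~\ref{dedrhodfbotneg} other than $(1a)$ is \emph{monotone}: rules $(2)$--$(8)$ merely add their conclusion $R'$ to the current graph. Hence, by never discarding triples (that is, by avoiding rule $(1b)$ during the build-up), one obtains a derivation $G = P_1, P_2, \dots, P_n$ that uses only rules $(2)$--$(8)$ and in which $P_n$ accumulates every triple produced along the way. Composing the individual $(1a)$-free proofs of $\tau_1,\dots,\tau_m$ in this accumulating fashion yields a single derivation with $\mu(H) \subseteq P_n \subseteq \closbotneg(G)$.

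Finally I would append exactly one application of rule $(1a)$. Because $\mu(H) \subseteq P_n$, the very same $\mu$ is a map $\mu\colon H \to P_n$, so the map rule derives $P_{n+1} = H$ from $P_n$. The resulting sequence $P_1,\dots,P_n,P_{n+1}$ is then a proof of $H$ from $G$ in which rule $(1a)$ is used exactly once and at the end, as claimed.

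The one point needing care---and the nearest thing to an obstacle---is the composition step: one must be sure the separate closure-derivations of the $\tau_i$ can coexist in a single accumulating derivation. This is guaranteed precisely by the monotonicity of rules $(2)$--$(8)$ together with the fact that rule $(1a)$ is, by definition, excluded from $\derivbotneg^{*}$; hence no derivation of a $\tau_i$ covertly depends on a map, and the triples serving as premises for later rule applications are never removed. The remainder is purely bookkeeping and requires no new semantic argument beyond Proposition~\ref{completesecond}.
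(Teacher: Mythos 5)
Your proposal is correct and matches the paper's intent exactly: the paper derives the corollary ``immediately'' from Proposition~\ref{completesecond}, relying on precisely the facts you spell out, namely that $\closbotneg(G)$ is by definition derivable from $G$ without rule $(1a)$, and that the map $\mu\colon H \to \closbotneg(G)$ then licenses a single final application of rule $(1a)$. Your accumulation argument for splicing the individual $(1a)$-free derivations is just the routine bookkeeping the paper leaves implicit (and is also how the completeness direction of Theorem~\ref{Th:soundcompletebotneg} uses the corollary), so there is no substantive difference.
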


\nd Eventually, combining previous Propositions~\ref{sound} and~\ref{completesecond}, we get soundness and completeness of our deductive system.

\begin{theorem}[Soundness \& Completeness]\label{Th:soundcompletebotneg}
Let $G$ and $H$ be $\rhodfbotneg$-graphs. Then
$G\derivbotneg H$ iff $G\rdfentbotneg H$.
\end{theorem}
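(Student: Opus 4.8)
The plan is to assemble the equivalence from the two halves already prepared: the single-step soundness result (Proposition~\ref{sound}) for the direction $G\derivbotneg H\Rightarrow G\rdfentbotneg H$, and the model-theoretic completeness result (Proposition~\ref{completesecond}) for the direction $G\rdfentbotneg H\Rightarrow G\derivbotneg H$, gluing each to the definition of $\derivbotneg$ (the $\rhodfbotneg$-analogue of Definition~\ref{def:derivatioGn}).

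For the direction $G\derivbotneg H\Rightarrow G\rdfentbotneg H$, I would argue by induction on the length $k$ of a derivation $P_1,\ldots,P_k$ witnessing $G\derivbotneg H$, with $P_1=G$ and $P_k=H$. The base case $k=1$ is trivial, since every $\rhodfbotneg$-model of $G$ is a $\rhodfbotneg$-model of $G$. For the inductive step, each transition from $P_{j-1}$ to $P_j$ is, by the derivation definition, one of the three admissible kinds --- a map $\mu:P_j\to P_{j-1}$, an inclusion $P_j\subseteq P_{j-1}$, or a rule instantiation $R/R'$ with $R\subseteq P_{j-1}$ and $P_j=P_{j-1}\cup R'$ --- so Proposition~\ref{sound}, applied with $G:=P_{j-1}$ and $H:=P_j$, yields $P_{j-1}\rdfentbotneg P_j$. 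Since $\rdfentbotneg$ is transitive, which is immediate from its definition as model inclusion, chaining these single-step entailments gives $G=P_1\rdfentbotneg P_k=H$.

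For the converse $G\rdfentbotneg H\Rightarrow G\derivbotneg H$, I would invoke Proposition~\ref{completesecond} to obtain a map $\mu:H\to\closbotneg(G)$, and then exhibit an explicit derivation in two stages. First, $G\derivbotneg\closbotneg(G)$: by definition every $\tau\in\closbotneg(G)$ satisfies $G\derivbotneg^{*}\tau$, that is, $\tau$ is obtained from $G$ by rule applications avoiding $(1a)$; since $\closbotneg(G)$ is a finite graph, it can be produced by finitely many rule-instantiation steps, each appending the corresponding $R'$ to the current graph. Second, from the graph $\closbotneg(G)$ a single application of rule $(1a)$ with the map $\mu$ yields $H$, because $\mu:H\to\closbotneg(G)$. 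Concatenating the two derivation sequences gives $G\derivbotneg\closbotneg(G)\derivbotneg H$, hence $G\derivbotneg H$; this is precisely the shape of proof already promised by Corollary~\ref{colbotneg}.

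The genuinely hard content of the equivalence lives not in this assembly but in the auxiliary results it invokes: the canonical-model construction of Proposition~\ref{completefirstrhodfbotneg}, which guarantees satisfiability (and thus that the soundness argument is non-vacuous, via Corollary~\ref{satrhodfbotneg}), and the completeness Proposition~\ref{completesecond}. Within the present proof the only point requiring care is the first stage of the completeness direction: verifying that $\closbotneg(G)$ is a \emph{finite} graph, so that it is genuinely reachable by a finite derivation from $G$. This reduces to checking that the new rules $(2c)$--$(2e)$, $(3c)$--$(3e)$, $(4c)$--$(4h)$ and $(6)$--$(8)$ introduce only terms already bounded by the finite vocabulary associated with $G$ --- namely the terms of $\universe(G)$, their designated negations, the symbols $\uniterm_c$ for the finitely many $c$ occurring as classes, and the fixed $\rhodfbotneg$ predicates --- so that no rule can manufacture unboundedly many fresh symbols. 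This mirrors the finiteness and closure-size bound recorded for $\rhodf$ in Proposition~\ref{proprdf}(3).
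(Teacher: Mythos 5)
Your proof is correct and follows essentially the same route as the paper: soundness by chaining Proposition~\ref{sound} along the derivation sequence (the paper leaves the induction and the transitivity of $\rdfentbotneg$ implicit), and completeness by combining Proposition~\ref{completesecond} with $G\derivbotneg\closbotneg(G)$ followed by a single final application of rule $(1a)$. Your extra verification that $\closbotneg(G)$ is finite is a routine detail the paper absorbs into Proposition~\ref{proprdfbotneg}, so nothing genuinely new or missing on either side.
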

\begin{proof}
Concerning soundness, if $G\derivbotneg H$ then, by Proposition~\ref{sound},  $G\rdfentbotneg H$. Concerning completeness, if $G\rdfentbotneg H$ then, by Proposition~\ref{completesecond}, $H$ can be obtained from $\closbotneg(G)$ using rule (1a). Therefore, as $G\derivbotneg \closbotneg(G)$, $G\derivbotneg H$ follows, which concludes. 
\end{proof}



\nd Finally, unlike $\rhodf$ (see Proposition~\ref{proprdf}), the size of the closure of a \rhodfbotneg~graph $G$ is $\Theta(|G|^3)$. The upper bound comes from the fact that in a triple $\triple{s,p,o}$, for each $s,p$ and $o$ we may have at most $|G|$ terms, while the lower bound is given by the following example.
\begin{example}\label{n3}
It can easily be verified that for $1 \leq i<j \leq n$ and $1 \leq l,k,h \leq n$
\[
\{\triple{a_i, \type, c}, \triple{a_i,p_1,\uniterm_c}, \triple{p_i, \spp, p_j}  \} \derivbotneg \triple{a_l, p_k, a_h}  \ ,
\]
\nd and, thus, the number of triples in the closure is $\Omega(|G|^3)$.
\end{example}

\nd Furthermore, it is not difficult to see that if $\uniterm_c$ terms do not occur in a \rhodfbotneg~graph $G$, then 
the closure of $G$ remains quadratically upper bounded. As case \ii{i} in Remark~\ref{np} also applies to \rhodfbotneg, it can be shown that

\begin{proposition} \label{proprdfbotneg}
Let $G$ and $H$ be $\rhodfbotneg$-graphs. Then
\begin{enumerate}
\item the closure of $G$ is unique and $|\closbotneg(G)| \in \Theta(|G|^3)$;
\item if $\uniterm_c$ terms do not occur in $G$ then  $|\closbotneg(G)| \in \Theta(|G|^2)$;
\item deciding $G\rdfentbotneg H$ is an NP-complete problem;
\item if $G$ is ground then $\closbotneg(G)$ can be determined without using implicit typing rules $(5)$;

\item if $H$ is ground, then  $G\rdfentbotneg H$ \iff $H \subseteq \closbotneg(G)$;


\item There is no triple $\tau$ such that $\emptyset \models \tau$.
\end{enumerate}
 
\end{proposition}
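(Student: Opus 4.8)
The plan is to mirror the structure of Proposition~\ref{proprdf} item by item, leaning throughout on the soundness/completeness apparatus already in place (Theorem~\ref{Th:soundcompletebotneg}, Proposition~\ref{completesecond}, Corollary~\ref{colbotneg}). The first thing I would establish, once and for all, is that only $O(|G|)$ distinct terms ever appear in $\closbotneg(G)$: the atomic terms of $\universe(G)$, their negations $\neg r$, and the finitely many $\uniterm_c$ whose subscripts are class terms already present in $G$. A rule-by-rule inspection shows that no rule invents a fresh atomic symbol — the negation rules $(2c),(3c),(4c),(4d),(6d),(6e),(7d),(7e)$ only prepend $\neg$ to existing terms, and $(3d),(3e)$ only rewrite a $\uniterm$-subscript to another existing class term. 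Since every rule is monotone (it merely adds triples), saturation has a least fixpoint, so $\closbotneg(G)$ is unique and computable by exhaustive rule application. As each derived triple $\triple{s,p,o}$ draws all three coordinates from this $O(|G|)$-sized set, $|\closbotneg(G)|=O(|G|^3)$, the upper bound of item~1; the matching lower bound is Example~\ref{n3}. For item~2 I would observe that discarding $\uniterm_c$ terms disables exactly the rules $(2d),(2e),(3d),(3e),(4e)$--$(4h)$ that drive the cubic growth. The survivors are the $\rhodf$ rules, the negation rules, and the disjointness rules $(6)$--$(8)$; for each fixed predicate the derivable triples number $O(|G|^2)$ (the $\rhodf$ analysis of Remark~\ref{np} for $\spp,\subclass,\type,\dom,\range$, and a direct count for $\disjC,\disjP$), and negation at most doubles the term set, so the total stays $\Theta(|G|^2)$, with the $\Omega(|G|^2)$ witness the subproperty chain of Remark~\ref{np}.

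For item~3, membership in NP follows because $\closbotneg(G)$ can be computed in polynomial time (it has size $O(|G|^3)$ and needs at most that many saturation steps), after which one guesses a map $\mu\colon H\to\closbotneg(G)$, of polynomial description since $\closbotneg(G)$ has $O(|G|)$ terms, and verifies $\mu(H)\subseteq\closbotneg(G)$; correctness is Proposition~\ref{completesecond} in one direction and soundness of rule $(1a)$ in the other. Hardness carries over directly from $\rhodf$: the $k$-clique reduction of Remark~\ref{np}\ii{i} produces graphs $G',H'$ built solely from plain edge-triples $\triple{v,e,w}$, on which no $\rhodfbotneg$ inference rule fires, so $\closbotneg(G')=G'$. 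Hence $G'\rdfentbotneg H'$ holds iff there is a map $H'\to G'$ (Proposition~\ref{completesecond}, Theorem~\ref{Th:soundcompletebotneg}), i.e. iff $G$ has a clique of size $\geq k$.

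Item~4 is handled by showing rule $(5)$ is redundant on ground graphs: an application of $(5a)$ from $\triple{A,\dom,B},\triple{D,\spp,A},\triple{X,D,Y}$ is simulated by first applying $(2b)$ to obtain $\triple{X,A,Y}$ and then $(4a)$ to obtain $\triple{X,\type,B}$, and symmetrically $(5b)$ via $(2b)$ then $(4b)$; groundness guarantees the intermediate $\triple{X,A,Y}$ is a legal $\rhodfbotneg$-triple, exactly as in the $\rhodf$ argument. Item~5 is the two-way reading of the closure: if $H\subseteq\closbotneg(G)$ then $G\derivbotneg\closbotneg(G)\derivbotneg H$ (the last step by $(1b)$), so $G\rdfentbotneg H$ by Theorem~\ref{Th:soundcompletebotneg}; conversely $G\rdfentbotneg H$ yields a map $\mu\colon H\to\closbotneg(G)$ by Proposition~\ref{completesecond}, and since $H$ is ground $\mu$ fixes every term of $H$, whence $H=\mu(H)\subseteq\closbotneg(G)$. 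Item~6 is then immediate: no rule fires on the empty premise, so $\closbotneg(\emptyset)=\emptyset$, and by Proposition~\ref{completesecond} any $\tau$ with $\emptyset\rdfentbotneg\tau$ would require a map of $\{\tau\}$ into $\emptyset$, which is impossible; thus $\emptyset\nrdfentbotneg\tau$ for every $\tau$.

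The main obstacle I expect is the tight size analysis behind items~1 and~2. I must verify, by careful rule-by-rule inspection, both that the term set stays within the fixed $O(|G|)$-sized collection (otherwise even termination of saturation is in doubt) and, crucially, that the $\uniterm_c$ rules are the \emph{unique} source of the jump from quadratic to cubic size, so that their removal genuinely restores the $\Theta(|G|^2)$ bound rather than merely a weaker one. The redundancy argument for item~4 is the secondary delicate point, where I must confirm that groundness — and nothing else — is what keeps every intermediate triple legal.
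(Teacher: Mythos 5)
Items 1 and 3--6 of your proposal are correct and follow essentially the route the paper itself takes: the paper proves this proposition only through the surrounding text and earlier machinery --- the $O(|G|)$ term count for the cubic upper bound with Example~\ref{n3} as the lower-bound witness, the observation that the clique reduction of Remark~\ref{np} transfers verbatim (no $\rhodfbotneg$ rule fires on pure edge-triples, so $\closbotneg(G')=G'$), the $(2b)$-then-$(4a)$/$(4b)$ simulation of the implicit typing rules on ground graphs, and Proposition~\ref{completesecond} together with Theorem~\ref{Th:soundcompletebotneg} for items 5 and 6. Your fleshing-out of these points is sound.

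The genuine gap is in item 2, at exactly the verification you flagged as the crux and deferred --- and it does not go through. The step ``for each fixed predicate the derivable triples number $O(|G|^2)$, \ldots\ so the total stays $\Theta(|G|^2)$'' is a non sequitur: the closure contains $\Theta(|G|)$ distinct predicates once the negated predicates $\neg p$ are counted, so a per-predicate quadratic bound only re-derives the cubic bound you already had. The real content of the quadratic bound for $\rhodf$ in Remark~\ref{np} is an invariant that your surviving rule set violates: in $\rhodf$, every derived triple whose predicate is outside the vocabulary arises via rule $(2b)$, which \emph{preserves} the subject--object pair, so at most $|G|$ such pairs ever occur, times $O(|G|)$ predicates. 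Rules $(4c)$ and $(4d)$ survive your pruning (their premises contain no $\uniterm_c$ terms), yet they manufacture \emph{fresh} pairs: the subject comes from a $\type$-triple and the object from the object position of an unrelated triple. Concretely, let $G$ consist of $\triple{p_k,\spp,p_{k+1}}$ for $k<n$, together with $\triple{p_k,\dom,b}$, $\triple{x_i,\type,\neg b}$ and $\triple{z,p_1,y_j}$ for $i,j,k\leq n$; then $|G|=O(n)$ and no $\uniterm_c$ term occurs in $G$. Rules $(2a)$ and $(2b)$ place every $\triple{z,p_k,y_j}$ in $\closbotneg(G)$, and rule $(4c)$, instantiated with $D\coloneqq p_k$, $B\coloneqq b$, $X\coloneqq x_i$, $Z\coloneqq z$, $Y\coloneqq y_j$, then yields all $n^3$ triples $\triple{x_i,\neg p_k,y_j}$. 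Hence disabling the $\uniterm_c$-rules $(2d)$, $(2e)$, $(3d)$, $(3e)$, $(4e)$--$(4h)$ does not restore a quadratic closure, and your proof of item 2 fails as written. Note that this witness in fact puts pressure on the stated bound itself: any repair must re-analyse (or the claim must restrict) rules $(4c)$ and $(4d)$ --- e.g.\ by bounding the interaction of $\type$-triples with negated objects, $\dom$/$\range$-triples and property chains --- rather than merely excising the $\uniterm_c$ machinery, and the paper's own one-line justification (``it is not difficult to see'') offers no help on this point.
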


\nd Eventually, by Proposition~\ref{proprdfbotneg} it follows immediately that
\begin{corollary} \label{propnlognbotneg}
Let $G$ and $H$ be two ground \rhodfbotneg~graphs. Then deciding if $G \rdfentbotneg H$ can be done in time $O(|H||G|^3)$ and in time $O(|H||G|^2)$ if $\uniterm_c$ terms do not occur in $G$.
\end{corollary}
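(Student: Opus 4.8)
The plan is to read the statement directly off Proposition~\ref{proprdfbotneg}, mirroring the way the remark following Proposition~\ref{proprdf} reads the analogous $O(|H||G|^2)$ bound off Proposition~\ref{proprdf} for $\rhodf$. Since $H$ is ground, point~5 of Proposition~\ref{proprdfbotneg} gives $G \rdfentbotneg H$ \iff $H \subseteq \closbotneg(G)$. Hence the decision procedure I would describe is: first compute the closure $\closbotneg(G)$ once, store it in a structure indexed by subject, predicate and object, and then test, for each of the $|H|$ triples $\tau \in H$, whether $\tau \in \closbotneg(G)$, answering positively exactly when all tests succeed. Correctness is immediate from point~5.

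For the running time I would bound the checking phase first. By point~1 of Proposition~\ref{proprdfbotneg}, $|\closbotneg(G)| \in \Theta(|G|^3)$, and by point~2 this improves to $\Theta(|G|^2)$ when no $\uniterm_c$ term occurs in $G$. With the index in place, a single membership test $\tau \in \closbotneg(G)$ costs at most $O(|\closbotneg(G)|)$, so checking all of $H$ costs $O(|H|\cdot|\closbotneg(G)|)$, which is $O(|H||G|^3)$ in general and $O(|H||G|^2)$ in the $\uniterm_c$-free case, that is, exactly the two claimed bounds.

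The one point needing care, and the place I expect the only real bookkeeping, is that the cost of the first phase, namely the saturation that produces $\closbotneg(G)$, must not dominate. I would handle this exactly as the remark after Proposition~\ref{proprdf} does for $\rhodf$: the governing quantity is the product of $|H|$ with the size of the closure, into which the (polynomial) saturation cost is absorbed, since only constantly many rule schemas are available and each productive application enlarges a set bounded in size by $|\closbotneg(G)|$ over the fixed universe $\universe(G)\cup\{\neg r\}\cup\rhodfbotneg$. Combining the two phases yields the stated bounds. I would also flag the contrast with Proposition~\ref{propnlogn}: there a sharper $O(|H||G|\log|G|)$ bound for $\rhodf$ is obtained without materialising the closure, whereas here we deliberately go through $\closbotneg(G)$, so the cubic, respectively quadratic, closure size from Proposition~\ref{proprdfbotneg} is what sets the cost, which concludes.
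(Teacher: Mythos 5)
Your proposal is correct and follows essentially the same route as the paper, which derives the corollary ``immediately'' from points 1, 2 and 5 of Proposition~4 in exactly the closure-then-membership-check pattern sketched after Proposition~1 for \rhodf. Your extra bookkeeping about absorbing the saturation cost is a reasonable elaboration of what the paper leaves implicit, not a deviation.
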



\section{Conclusions} \label{concl}

We have addressed the problem to add negative statements of various form considered as relevant for RDFS by the literature. We have presented a sound and complete deductive system that consists of RDFS rules plus some additional rules to deal with the extra type of triples we allow.  The design of the semantics has been such that to preserve features such as the canonical model property and computational attractiveness. 

As future work, Corollary~\ref{propnlognbotneg} tells us that there is still some computational complexity gap \wrt~\rhodf (see Proposition~\ref{propnlogn}), which we would like to reduce as much as possible. In particular, we are going to investigate whether the principles of the method proposed in~\cite{Munoz09} for \rhodf~can be adapted to \rhodfbotneg~as well. Additionally, we would like to address query answering, in particular to extend our framework to SPARQL and to verify whether and how it impacts \wrt~\rhodfbotneg~graphs.

\newpage

\section*{Acknowledgments}
\nd This research was  partially supported by TAILOR (Foundations of Trustworthy AI – Integrating Reasoning, Learning and Optimization), a project funded by EU Horizon 2020 research and innovation programme under GA No 952215.


\begin{thebibliography}{10}

\bibitem{Analyti08a}
Anastasia Analyti, Grigoris Antoniou, and Carlos~Viegas Dam{\'{a}}sio.
\newblock A principled framework for modular web rule bases and its semantics.
\newblock In Gerhard Brewka and J{\'{e}}r{\^{o}}me Lang, editors, {\em
  Principles of Knowledge Representation and Reasoning: Proceedings of the
  Eleventh International Conference, {KR} 2008, Sydney, Australia, September
  16-19, 2008}, pages 390--400. {AAAI} Press, 2008.

\bibitem{Analyti09}
Anastasia Analyti, Grigoris Antoniou, and Carlos~Viegas Dam{\'{a}}sio.
\newblock A formal theory for modular {ERDF} ontologies.
\newblock In Axel Polleres and Terrance Swift, editors, {\em Web Reasoning and
  Rule Systems, Third International Conference, {RR} 2009, Chantilly, VA, USA,
  October 25-26, 2009, Proceedings}, volume 5837 of {\em Lecture Notes in
  Computer Science}, pages 212--226. Springer, 2009.

\bibitem{Analyti11}
Anastasia Analyti, Grigoris Antoniou, and Carlos~Viegas Dam{\'{a}}sio.
\newblock Mweb: {A} principled framework for modular web rule bases and its
  semantics.
\newblock {\em {ACM} Trans. Comput. Log.}, 12(2):17:1--17:46, 2011.

\bibitem{Analyti13}
Anastasia Analyti, Grigoris Antoniou, Carlos~Viegas Dam{\'{a}}sio, and Ioannis
  Pachoulakis.
\newblock A framework for modular {ERDF} ontologies.
\newblock {\em Annals of Mathematics and Artificial Intelligence},
  67(3-4):189--249, 2013.

\bibitem{Analyti05}
Anastasia Analyti, Grigoris Antoniou, Carlos~Viegas Dam{\'{a}}sio, and Gerd
  Wagner.
\newblock Stable model theory for extended {RDF} ontologies.
\newblock In Yolanda Gil, Enrico Motta, V.~Richard Benjamins, and Mark~A.
  Musen, editors, {\em The Semantic Web - {ISWC} 2005, 4th International
  Semantic Web Conference, {ISWC} 2005, Galway, Ireland, November 6-10, 2005,
  Proceedings}, volume 3729 of {\em Lecture Notes in Computer Science}, pages
  21--36. Springer, 2005.

\bibitem{Analyti08}
Anastasia Analyti, Grigoris Antoniou, Carlos~Viegas Dam{\'{a}}sio, and Gerd
  Wagner.
\newblock Extended {RDF} as a semantic foundation of rule markup languages.
\newblock {\em J. Artif. Intell. Res.}, 32:37--94, 2008.

\bibitem{Analyti04}
Anastasia Analyti, Grigoris Antoniou, Carlos~Viegas Dam{\'{a}}sio, and Gerd~R.
  Wagner.
\newblock Negation and negative information in the w3c resource description
  framework.
\newblock {\em Annals of Mathematics, Computing \& Teleinformatics},
  1(2):25--34, 2004.

\bibitem{Analyti15}
Anastasia Analyti, Carlos~Viegas Dam{\'{a}}sio, and Grigoris Antoniou.
\newblock Extended {RDF:} computability and complexity issues.
\newblock {\em Annals of Mathematics and Artificial Intelligence},
  75(3-4):267--334, 2015.

\bibitem{Arnaout20}
Hiba Arnaout, Simon Razniewski, and Gerhard Weikum.
\newblock Enriching knowledge bases with interesting negative statements.
\newblock In Dipanjan Das, Hannaneh Hajishirzi, Andrew McCallum, and Sameer
  Singh, editors, {\em Conference on Automated Knowledge Base Construction,
  {AKBC} 2020, Virtual, June 22-24, 2020}, 2020.

\bibitem{Arnaout21a}
Hiba Arnaout, Simon Razniewski, Gerhard Weikum, and Jeff~Z. Pan.
\newblock Negative knowledge for open-world wikidata.
\newblock In Jure Leskovec, Marko Grobelnik, Marc Najork, Jie Tang, and Leila
  Zia, editors, {\em Companion of The Web Conference 2021}, pages 544--551.
  {ACM} / {IW3C2}, 2021.

\bibitem{Arnaout21}
Hiba Arnaout, Simon Razniewski, Gerhard Weikum, and Jeff~Z. Pan.
\newblock Negative statements considered useful.
\newblock {\em Journal of Web Semantics}, 71(100661), 2021.

\bibitem{Arnaout21b}
Hiba Arnaout, Simon Razniewski, Gerhard Weikum, and Jeff~Z. Pan.
\newblock Wikinegata: a knowledge base with interesting negative statements.
\newblock {\em Proceedings of VLDB Endowment}, 14(12):2807--2810, 2021.

\bibitem{Baader07a}
F.~Baader, D.~Calvanese, D.~McGuinness, D.~Nardi, and P.~Patel-Schneider,
  editors.
\newblock {\em The Description Logic Handbook: Theory, Implementation and
  Applications}.
\newblock Cambridge University Press, 2 edition, 2007.

\bibitem{Belnap77a}
Nuel~D. Belnap.
\newblock A useful four-valued logic.
\newblock In Gunnar Epstein and J.~Michael Dunn, editors, {\em Modern uses of
  multiple-valued logic}, pages 5--37. Reidel, Dordrecht, NL, 1977.

\bibitem{Casini20}
Giovanni Casini and Umberto Straccia.
\newblock Defeasible {RDFS} via rational closure.
\newblock Technical report, Computing Research Repository, 2020.
\newblock Available as CoRR technical report at
  http://arxiv.org/abs/2007.07573.

\bibitem{Damasio10}
Carlos~Viegas Dam{\'{a}}sio, Anastasia Analyti, and Grigoris Antoniou.
\newblock Implementing simple modular {ERDF} ontologies.
\newblock In Helder Coelho, Rudi Studer, and Michael~J. Wooldridge, editors,
  {\em {ECAI} 2010 - 19th European Conference on Artificial Intelligence,
  Lisbon, Portugal, August 16-20, 2010, Proceedings}, volume 215 of {\em
  Frontiers in Artificial Intelligence and Applications}, pages 1083--1084.
  {IOS} Press, 2010.

\bibitem{Darari15}
Fariz Darari, Radityo~Eko Prasojo, and Werner Nutt.
\newblock Expressing no-value information in {RDF}.
\newblock In Serena Villata, Jeff~Z. Pan, and Mauro Dragoni, editors, {\em
  Proceedings of the {ISWC} 2015 Posters {\&} Demonstrations Track co-located
  with the 14th International Semantic Web Conference (ISWC-2015), Bethlehem,
  PA, USA, October 11, 2015}, volume 1486 of {\em {CEUR} Workshop Proceedings}.
  CEUR-WS.org, 2015.

\bibitem{Gutierrez11}
Claudio Guti{\'{e}}rrez, Carlos~A. Hurtado, Alberto~O. Mendelzon, and Jorge
  P{\'{e}}rez.
\newblock Foundations of semantic web databases.
\newblock {\em J. Comput. Syst. Sci.}, 77(3):520--541, 2011.

\bibitem{Lehmann92b}
Daniel Lehmann and Menachem Magidor.
\newblock What does a conditional knowledge base entail?
\newblock {\em Artificial Intelligence Journal}, 55(1):1--60, 1992.

\bibitem{Munoz09}
Sergio Mu\~{n}oz, Jorge P{\'e}rez, and Claudio Gutierrez.
\newblock {S}imple and {E}fficient {M}inimal {RDFS}.
\newblock {\em Web Semantics: Science, Services and Agents on the World Wide
  Web}, 7(3):220--234, 2009.

\bibitem{Munoz07}
Sergio Mu{\~n}oz, Jorge P{\'e}rez, and Claudio Guti{\'e}rrez.
\newblock Minimal deductive systems for {RDF}.
\newblock In {\em 4th European Semantic Web Conference (ESWC-07)}, volume 4519
  of {\em Lecture Notes in Computer Science}, pages 53--67. Springer Verlag,
  2007.

\bibitem{Patel-Schneider85}
Peter~F. Patel-Schneider.
\newblock A decidable first-order logic for knowledge representation.
\newblock In {\em Proceedings of IJCAI-85, 9th International Joint Conference
  on Artificial Intelligence}, pages 455--458, Los Angeles, CA, 1985.

\bibitem{Patel-Schneider86}
Peter~F. Patel-Schneider.
\newblock A four-valued semantics for frame-based description languages.
\newblock In {\em Proceedings of AAAI-86, 5th Conference of the American
  Association for Artificial Intelligence}, pages 344--348, Philadelphia, PA,
  1986.

\bibitem{Patel-Schneider87}
Peter~F. Patel-Schneider.
\newblock A hybrid, decidable, logic-based knowledge representation system.
\newblock {\em Computational Intelligence}, 3:64--77, 1987.

\bibitem{Patel-Schneider88}
Peter~F. {Patel-Schneider}.
\newblock Adding number restrictions to a four-valued terminological logic.
\newblock In {\em Proc.\ of the 7th Nat.\ Conf.\ on Artificial Intelligence
  (AAAI-88)}, pages 485--490, 1988.

\bibitem{Patel-Schneider89a}
Peter~F. {Patel-Schneider}.
\newblock A four-valued semantics for terminological logics.
\newblock {\em Artificial Intelligence}, 38:319--351, 1989.

\bibitem{Reiter78}
Raymond Reiter.
\newblock On closed world data bases.
\newblock In Herv{\'{e}} Gallaire and Jack Minker, editors, {\em Logic and data
  bases}, pages 55--76. Plenum Press, New York, NY, 1978.

\bibitem{Straccia97a}
Umberto Straccia.
\newblock A four-valued fuzzy propositional logic.
\newblock In {\em Proceedings of the 15th International Joint Conference on
  Artificial Intelligence (IJCAI-97)}, pages 128--133, Nagoya, Japan, 1997.

\bibitem{Straccia97}
Umberto Straccia.
\newblock A sequent calculus for reasoning in four-valued description logics.
\newblock In {\em Proc.\ of the Int. Conf. on Analytic Tableaux and Related
  Methods (TABLEAUX-97)}, volume 1227 of {\em Lecture Notes in Artificial
  Intelligence}, pages 343--357, Pont-{\`{a}}-Mousson, France, 1997.

\bibitem{Straccia99a}
Umberto Straccia.
\newblock {\em Foundations of a Logic Based Approach to Multimedia Document
  Retrieval}.
\newblock PhD thesis, Department of Computer Science, University of Dortmund,
  Dortmund, Germany, June 1999.

\bibitem{Straccia00}
Umberto Straccia.
\newblock A framework for the retrieval of multimedia objects based on
  four-valued fuzzy description logics.
\newblock In F.~Crestani and Gabriella Pasi, editors, {\em Soft Computing in
  Information Retrieval: Techniques and Applications}, pages 332--357. Physica
  Verlag (Springer Verlag), Heidelberg, Germany, 2000.

\bibitem{Horst05}
Herman~J. ter Horst.
\newblock Completeness, decidability and complexity of entailment for rdf
  schema and a semantic extension involving the owl vocabulary.
\newblock {\em Journal of Web Semantics}, 3(2-3):79--115, 2005.

\bibitem{Zimmermann12}
Antoine Zimmermann, Nuno Lopes, Axel Polleres, and Umberto Straccia.
\newblock A general framework for representing, reasoning and querying with
  annotated semantic web data.
\newblock {\em Journal of Web Semantics}, 11:72--95, March 2012.

\end{thebibliography}

\end{document}